\pgfplotsset{compat=1.14}
\newtheorem{lem}{Lemma}
\newtheorem{thm}{Theorem}
\newtheorem{cor}{Corollary}
\newtheorem{hypo}{Hypothesis}
\definecolor{RYB3}{RGB}{253, 180, 98}
\begin{document}

\title{On SkipGram Word Embedding Models with Negative Sampling: Unified Framework and Impact of Noise Distributions}
\author{Dezhi Liu, Richong Zhang,\IEEEmembership{~Member,~IEEE}, Ziqiao Wang

\thanks{This work was supported by the National Natural Science Foundation of China (No. U23B2056), and in part by the Fundamental Research Funds for the Central Universities, and in part by the State Key Laboratory of Complex \& Critical Software Environment. \it{(Corresponding author: Richong Zhang.)}}
\thanks{Dezhi Liu and Richong Zhang are with the School of Computer Science and Engineering, Beihang University, Xueyuan Road 37, Beijing, 100191, China (e-mail:dezhi.liu@buaa.edu.cn;zhangrc@act.buaa.edu.cn).}
\thanks{Ziqiao Wang is with the School of Computer Science and Technology, Tongji University, 4800 Cao’an Highway, Shanghai, 201804, China (e-mail: ziqiaowang@tongji.edu.cn).}
}

\markboth{Journal of \LaTeX\ Class Files,~Vol.~14, No.~8, August~2021}%
{Shell \MakeLowercase{\textit{et al.}}: A Sample Article Using IEEEtran.cls for IEEE Journals}


\maketitle

\begin{abstract}
SkipGram word embedding models with negative sampling, or SGN in short, is an elegant family of word embedding models. In this paper, we formulate a framework for word embedding, referred to as Word-Context Classification (WCC), that generalizes SGN to a wide family of models. The framework, which uses some ``noise examples'', is justified through theoretical analysis. The impact of noise distribution on the learning of the WCC embedding models is studied experimentally, suggesting that the best noise distribution is, in fact, the data distribution, in terms of both the embedding performance and the speed of convergence during training. Along our way, we discover several novel embedding models that outperform existing WCC models.
\end{abstract}

\begin{IEEEkeywords}
 Word Embedding, Negative Sampling, Noise Distribution, Adaptive Learning
\end{IEEEkeywords}

\section{Introduction}
\IEEEPARstart{L}{earning}  distributed word representations, commonly known as word embeddings \cite{W2V}, have been fundamental to natural language processing (NLP). The field has progressed from shallow architectures like GloVe \cite{pennington2014glove} and SkipGram \cite{MikolovSCCD13} to deep contextualized models, including ELMo \cite{peters2018deep}, BERT \cite{devlin2019bert}, and more recently, decoder-only LLMs as generalist embedding models \cite{tao2024llms}. Despite these advances, pre-trained word embeddings remain essential for initializing NLP models, underscoring the continued importance of embedding learning research.

Among these approaches, SkipGram represents one of the earliest and most influential word embedding methods. The model operates by parsing text into (center word, context word) pairs and learning to predict context words from center words. This predictor is parameterized by the word embeddings themselves, such that learning the predictor yields the desired embeddings. However, with large vocabularies, this approach becomes computationally expensive as it requires a $V$-class classifier. Negative sampling addresses this issue by reformulating the problem as binary classification: genuine word-context pairs from the corpus serve as positive examples, while artificially generated pairs from a noise distribution provide negative examples. The resulting binary classifier, still parameterized by word embeddings, provides an efficient alternative for obtaining high-quality embeddings. We refer to this approach as SGN (SkipGram with Negative Sampling) throughout this paper.


Despite its simplicity, SGN has been shown to perform quite well in~\cite{MikolovSCCD13}. Even though SGN now tends to be replaced by more sophisticated embedding models in NLP applications, to us, SGN is still a fundamental model: its elegance seems to imply a general principle for representation learning; its simplicity may allow a theoretical analysis. On the contrary, in those more advanced and sophisticated models, e.g., BERT~\cite{devlin2019bert,brown2020language}, LLMs~\cite{tao2024llms,leenv}, the design is primarily motivated by conceptual intuitions; theoretical analysis appears very difficult, at least with current analytical tools developed for deep learning. 

However, the theoretical analysis of SGN remains thin to date. The main analytic result derived for SGN to date is that of \cite{LevyG14}. The result states that, under a particular noise distribution, the SGN can be interpreted as an implicit factorization of a pointwise mutual information matrix.

Many questions in fact are unanswered for SGN. Specifically, in this paper, we ask the following questions. Beyond that particular distribution, if one chooses a different noise distribution, is SGN still theoretically justified? Is there a general principle underlying SGN that allows us to build new embedding models?
If so, how does the noise distribution impact the training of such models and their achievable performances? 
These questions are answered in this paper. 

To that end, we formalize a unified framework, referred to as ``word-context classification'' or WCC, for SGN-like word embedding models. The framework, including SGN as a special case,  allows the noise distribution to take arbitrary forms and hence results in a broad family of SGN-like models.  We also provide a theoretical analysis that justifies the WCC framework. Consequently, the matrix factorization result of \cite{LevyG14} can also be derived from this analysis as a special case. 

In addition, the impact of noise distribution on learning word embeddings in WCC is also studied in this paper. 
For this purpose, we classify the WCC models into conventional SGN models and conditional SGN models, according to the factorization form of the noise distribution. We argue theoretically that the conditional models are inherently advantageous, thereby hypothesizing that the best WCC model is in fact the conditional SGN model with noise distribution precisely equal to the data distribution, i.e., the word-context pair distribution in the corpus. It is unfortunate, however, that the conditional models are in fact discouraged due to their high training complexity. 
To tackle this, we propose a variant of the conditional SGN model, the adaptive conditional SGN (caSGN) model, where the noise distribution adapts itself gradually towards the data distribution. This adaptation was achieved by generatively modeling the noise distribution and learning the generator in a way similar to that of GAN \cite{GoodfellowPMXWOCB14}. We show that the caSGN models may be constructed with various structures of the generator. In particular, a previously proposed model, ACE \cite{BoseLC18}, may be regarded a special form of caSGN.  For the sake of comparison, the adaptive version of the unconditional SGN model is also presented.

Extensive experiments are then conducted to further this study. A wide spectrum of WCC models (SGN, versions of caSGN, and ACE) are implemented and compared to investigate the impact of noise distribution. These learned embeddings are evaluated using three downstream tasks over 12 datasets. The experiments we conducted indicate that the caSGN models are superior to other models, thereby affirming the accuracy of the WCC framework and verifying the hypothesis that the most suitable noise distribution in WCC is the data distribution.

\section{Related Work}\label{relate work}

Negative sampling has been a key technique in word embedding learning since its introduction in Word2Vec~\cite{MikolovSCCD13}. Recent studies have improved negative sampling from various angles. For example, \cite{zhang2018gneg} proposed a graph-based method that uses global corpus information to generate word-specific noise distributions, improving word analogy and similarity performance. \cite{chen2018improving} addressed gradient vanishing in skip-gram by dynamically selecting informative negative samples based on inner product scores.  \cite{mu2019revisiting} rectified the skip-gram model with quadratic regularization. These works focus on objective function design and are less related to our focus on noise distribution.

Other techniques, such as GAN, have also been utilized in the learning of embeddings, and GAN-based negative sampling has gained attention in various domains \cite{Wang0017,wang2017irgan,wang2018incorporating, GoodfellowPMXWOCB14}. \cite{Wang0017} used GANs to generate fake sentences for commonsense machine comprehension. \cite{wang2017irgan} proposed a minimax game combining generative and discriminative retrieval models.\cite{wang2018incorporating} used GANs to generate negative samples for knowledge representation learning.~\cite{GoodfellowPMXWOCB14} applied GANs to text generation via word embeddings.

Our work distinguishes itself by providing a unified theoretical framework that systematically analyzes the role of noise distributions in embedding learning. While recent studies have explored adaptive sampling strategies in specific domains \cite{bai2022open} and generalized Skip-Gram formulations \cite{zhu2023free}, our WCC framework offers a comprehensive theoretical foundation that encompasses these approaches, with particular emphasis on establishing the optimality conditions for noise distributions in word embedding tasks.

Negative sampling plays a vital role in contrastive learning for sentence and graph embeddings. \cite{chen_incremental_2022} proposed incrementally removing false negatives, while \cite{wang_sncse_2022} introduced soft negative samples to improve sentence embeddings. \cite{robinson_contrastive_2021} emphasized the importance of hard negatives, and \cite{kalantidis_hard_2020} suggested hard negative mixing at the feature level. \cite{zhang_unsupervised_2022} focused on creating diverse positives and negatives at the group level. In graph embedding, \cite{liu2025bypassing} recently proposed dimension regularization as a more efficient alternative to skip-gram negative sampling. Notably, the well-known noise contrastive estimation (NCE) framework shares a similar negative sampling approach, though its connection to methods like SGN and WCC is loose, as NCE requires an evaluable noise distribution, unlike WCC.

Theoretical analyses of negative sampling have been limited. \cite{LevyG14} showed that SGN implicitly factorizes a PMI matrix. \cite{zhang2021understanding} examined hard negatives in noise contrastive estimation. \cite{oyama2023norm} illustrated that the norm of word embeddings encodes information gain related to noise distribution. \cite{han2024word} indicated that word embeddings can influence language models, emphasizing embedding quality. \cite{yang2024does} provided a review of negative sampling's theory and applications in machine learning. Our work builds on these advances by proposing a unified framework that generalizes SGN and allows for adaptive noise distributions, supported by theoretical and empirical evidence.

\section{Word-Context Classification}\label{method}

\subsection{The Word Embedding Problem}
Let ${\cal X}$ denote a vocabulary of words, and let ${\cal Y}$ denote a set of contexts. When considering the SkipGram models, context is often considered as a single word, which is the case ${\cal Y}$ is ${\cal X}$. But to better distinguish words and contexts, we prefer to use ${\cal Y}$ to denote the set of all contexts. 

In this setting, a given training corpus may be parsed into a collection ${\cal D}^+$ of word-context pairs $(x, y)$ from ${\cal X}\times {\cal Y}$. For example, as is standard \cite{MikolovSCCD13}, we may use a running window of length $2L+1$ to slide over the corpus; for each window location, we collect $2L$ word-context pairs, where the word located in the center of the window is taken as $x$ in each pair $(x, y)$, and each of the remaining $2L$ words in the window is taken as a context $y$, paired with $x$. This gives rise to the {\em training data}, or the {\em positive sample} ${\cal D}^+$. With respect to context $y$, we sometimes call the word $x$ the ``center word''.

As is conventional, word-context pairs ${\cal D}^+$ are assumed to
contain i.i.d. instances of a random word-context pair $(X, Y)$ drawn from an unknown distribution ${\mathbb P}$ on ${\cal X}\times {\cal Y}$.
We will use ${\mathbb P}_{\cal X}$ to denote the marginal of ${\mathbb P}$ on ${\cal X}$, and for each $x\in {\cal X}$, use ${\mathbb P}_{{\cal Y}|{x}}$ 
to denote the conditional distribution of $Y$ given $X=x$ under ${\mathbb P}$.
Let $N^+$ denote the number of pairs in ${\cal D}^+$. 

The objective of word embedding is to learn a vector representation for each word in ${\cal X}$ (and possibly also for each context in ${\cal Y}$).

We now introduce the Word-Context Classification (WCC) framework, which provides a unified perspective to negative-sampling based SkipGram embedding models.

\subsection{The WCC Framework}\label{sec:32}

For each $x\in {\cal X}$, we let ${\mathbb Q}_{{\cal Y}|x}$ be a distribution on ${\cal Y}$, and we define a distribution ${\mathbb Q}$ on ${\cal X} \times {\cal Y}$ as the {\em noise distribution}.
Given ${\mathbb Q}$, we draw $N^-$ word-context pairs i.i.d. from ${\mathbb Q}$ to form a {\em noise sample} or {\em negative sample} ${\cal D}^-$. 

Now we define a binary classification problem on samples ${\cal D}^+$ and ${\cal D}^-$ with the objective of learning a binary classifier capable of distinguishing the word-context pairs drawn from ${\mathbb P}$ from those drawn from ${\mathbb Q}$.  The word embedding problem of interest will be nested inside the classifier-learning problem.

To that end, let the binary variable $U$ denote the class label associated with each pair of words-context $(X, Y)$. Specifically, whenever we draw a pair $(X, Y)$ from ${\mathbb P}$, we also create a label $U\!=\!1$, and likewise, whenever we draw a pair $(X, Y)$ from ${\mathbb Q}$, we also create a label $U\!=\!0$. That is, all pairs in ${\cal D}^+$ are associated with label $U\!=\!1$, and all pairs in ${\cal D}^-$ associated with label $U\!=\!0$.  Then the classification problem is equivalent to learning the conditional distribution $p_{U|XY}(\cdot|x, y)$ from ${\cal D}^+$ and ${\cal D}^-$. 

Let $\sigma(\cdot)$ denote the logistic function and let the classifier  $p_{U|XY}(\cdot|x,y)$ take the form
\begin{equation}
p_{U|XY}(1|x,y) := \sigma \left(s(x, y)\right)
\end{equation}
for some function $s$ on ${\cal X}\times {\cal Y}$. Note that such a form of classifiers is well known to be sufficiently expressive and entail no loss of generality
\cite{jordan1995logistic}. We will refer to $s(x, y)$ as the {\em score} of the word-context pair $(x, y)$. Any parameterized modelling for such a classification problem then reduces to a selection of the score function $s(\cdot)$. 

In order to learn a distributed representation of words, consider the following family of parameterizations of the score function $s$. 

Let $\overline{\cal X}$ and $\overline{\cal Y}$ be two vector spaces that serve as embedding spaces for words and contexts, respectively. Let $f:{\cal X} \rightarrow \overline{\cal X}$  and $g:{\cal Y} \rightarrow \overline{\cal Y}$ be two functions representing the embedding maps for words and contexts. Let $s(x, y)$ take the form
\begin{equation}
s(x, y):= {\bf score}\left(f(x), g(y)\right),
\end{equation}
for some function ${\bf score }(\cdot)$ on $\overline{\cal X}\times \overline{\cal Y}$. In the most general case, the functions $f$, $g$ and ${\bf score}$ can all be made learnable.  In this paper, however, we
follow the classical choice in \cite{MikolovSCCD13} for simplicity, where
$\overline{\cal X}$ and $\overline{\cal Y}$ are taken as the same vector space, and
the function ${\bf score}(\cdot)$ is taken as the standard inner product operation therein, namely not to be learned.

It is easy to see that the standard cross-entropy loss for this classification problem is 
\begin{equation}
\label{eq:loss}
\ell \!\!=\!-\!\!\!\!\!\sum_{(x, y)\in {\cal D}^+}\!\!\!\!\!\ \log  \sigma \left(s(x, y)\right) -\!\!\!\!\!\sum_{(x, y)\in {\cal D}^-}\!\!\!\!\!\ \log  \sigma \left( - s(x, y)\right).
\end{equation}
The standard Maximum Likelihood formulation of the learning problem is then minimizing the loss function $\ell$ over all possible embedding maps $f$ and $g$, namely, solving for
\begin{equation}
\label{eq:opt}
    \left({f^*}, {g^*}\right): = 
    \arg\min_{f, g} \ell (f, g).
\end{equation}
Above, we have explicitly written the cross-entropy loss $\ell$ as a function of the embedding maps $f$ and $g$.

At this end, we see that solving this binary ``word-context classification'' problem provides an embedding map $f$, thus giving a solution to the word embedding problem of interest. 
We refer to this framework as the Word-Context Classification or WCC framework. 

\subsection{Theoretical Properties of WCC}

To justify the WCC framework, we derive a set of theoretical properties for the optimization problem in (\ref{eq:opt}).

Let $\widetilde{\mathbb P}$ and $\widetilde{\mathbb Q}$ be the empirical word-context distributions observed in ${\cal D}^+$ and ${\cal D}^-$ respectively. That is, $\widetilde{\mathbb P}(x,y) = \frac{\#(x, y)}{N^+}$ where $\#(x, y)$ is the number of times the word-context pair $(x, y)$ appears in ${\cal D}^+$, and  $\widetilde{\mathbb Q}(x,y)$ is defined similarly.

We will say that the distribution $\widetilde{\mathbb Q}$ {\em covers} the distribution $\widetilde{\mathbb P}$ if the support ${\rm Supp}\left( \widetilde{\mathbb P} \right)$ of $\widetilde{\mathbb P}$ is a subset of the support
${\rm Supp}\left( \widetilde{\mathbb Q} \right)$
of $\widetilde{\mathbb Q}$.  Recall that the support of a distribution is a set of all points on which the probability is non-zero.

Note that the function $s$ assigns a score to each word-context pair $(x, y)$. Thus, we can view $s$ as an $|{\cal X}| \times |{\cal Y}|$ ``score matrix''. Additionally, the loss $\ell$ in (\ref{eq:loss}) may also be treated as a function of the matrix $s$. 

\begin{thm}
\label{thm:matrixFactorization}
Suppose that $\widetilde{\mathbb Q}$ covers $\widetilde{\mathbb P}$. Then the following holds.
\begin{enumerate}
\item The loss $\ell$, as a function of $s$, is convex in $s$. 
\item If $f$ and $g$ are sufficiently expressive, then there is a unique configuration $s^*$ of $s$ that minimizes $\ell(s)$, and the global minimizer $s^*$ of $\ell(s)$ is given by 
\[
s^*(x, y) = \log
\frac{\widetilde{\mathbb P}(x, y)}{\widetilde{\mathbb Q}(x, y)} + \log \frac{N^+}{N^-}
\]
for every $(x, y) \in {\cal X}\times {\cal Y}$.
\end{enumerate}
\end{thm}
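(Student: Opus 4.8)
The plan is to exploit the fact that, up to positive scaling by the sample sizes, the loss $\ell$ in (\ref{eq:loss}) is an empirical average that \emph{separates across the entries of the score matrix $s$}. Writing $\#^+(x,y)$ and $\#^-(x,y)$ for the numbers of times $(x,y)$ occurs in ${\cal D}^+$ and ${\cal D}^-$, so that $\#^+(x,y)=N^+\widetilde{\mathbb P}(x,y)$ and $\#^-(x,y)=N^-\widetilde{\mathbb Q}(x,y)$, and using $-\log\sigma(t)=\log(1+e^{-t})$ and $-\log\sigma(-t)=\log(1+e^{t})$, one has
\[
\ell(s)=\sum_{(x,y)\in{\cal X}\times{\cal Y}}\ell_{xy}\bigl(s(x,y)\bigr),\qquad
\ell_{xy}(t):=\#^+(x,y)\log\bigl(1+e^{-t}\bigr)+\#^-(x,y)\log\bigl(1+e^{t}\bigr).
\]
This reduces both parts of the theorem to an elementary one-dimensional analysis of $\ell_{xy}$.

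For Part 1, I would note that $t\mapsto\log(1+e^{t})$ has second derivative $\sigma(t)\sigma(-t)>0$ and is therefore convex, and likewise $t\mapsto\log(1+e^{-t})$; since the coefficients $\#^+(x,y),\#^-(x,y)$ are nonnegative, each $\ell_{xy}$ is convex, and a sum of convex functions of distinct coordinates is convex. Hence $\ell$, viewed as a function of the matrix $s$, is convex. (This says nothing about convexity in $(f,g)$, which is genuinely nonconvex because of the bilinear parameterization; the argument only pins down the optimal value.)

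For Part 2, fix $(x,y)$ and set $a:=\#^+(x,y)$, $b:=\#^-(x,y)$. Differentiating gives $\ell_{xy}'(t)=(a+b)\sigma(t)-a$ and $\ell_{xy}''(t)=(a+b)\sigma(t)\sigma(-t)$. When $a>0$ — which by the covering hypothesis forces $b>0$ as well, since $\widetilde{\mathbb P}(x,y)>0\Rightarrow\widetilde{\mathbb Q}(x,y)>0$ — the function $\ell_{xy}$ is \emph{strictly} convex and coercive ($\ell_{xy}(t)\to+\infty$ as $|t|\to\infty$), so it has a unique minimizer, found from $\ell_{xy}'(t)=0$, i.e. $\sigma(t)=a/(a+b)$, which yields
\[
s^*(x,y)=\log\frac{a}{b}=\log\frac{\widetilde{\mathbb P}(x,y)}{\widetilde{\mathbb Q}(x,y)}+\log\frac{N^+}{N^-}.
\]
When $a=0<b$ the term $\ell_{xy}$ is strictly decreasing and its infimum is approached only as $t\to-\infty$, consistent with reading the displayed formula as $\log 0=-\infty$; on the remaining pairs both counts vanish and $\ell_{xy}\equiv 0$, so those entries of $s$ are immaterial. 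Assembling the per-coordinate minimizers gives the stated $s^*$. It then remains to transfer this from $s$ to $(f,g)$: ``sufficiently expressive'' should be read as the embedding dimension being large enough that $(f,g)\mapsto\bigl(\langle f(x),g(y)\rangle\bigr)_{(x,y)}$ surjects onto all score matrices, so that $\min_{f,g}\ell(f,g)=\min_s\ell(s)$ is attained at $s^*$.

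The main obstacle, and the place where care is needed, is precisely the degenerate entries. Pairs with $\widetilde{\mathbb Q}(x,y)=0$ (hence $\widetilde{\mathbb P}(x,y)=0$ by covering) leave $s(x,y)$ completely unconstrained, and pairs with $\widetilde{\mathbb P}(x,y)=0<\widetilde{\mathbb Q}(x,y)$ push $s(x,y)$ to $-\infty$, so strictly speaking the infimum over \emph{finite} embeddings need not be attained. The covering hypothesis is exactly what excludes the symmetric failure $\widetilde{\mathbb P}(x,y)>0=\widetilde{\mathbb Q}(x,y)$ (which would force a $+\infty$ score and destroy coercivity on $\mathrm{Supp}(\widetilde{\mathbb P})$); the cleanest write-up either restricts attention to $\mathrm{Supp}(\widetilde{\mathbb Q})$ or states the minimizer in the extended reals, and reads the ``sufficiently expressive'' clause as a limiting statement.
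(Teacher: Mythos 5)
Your proof is correct, and it reaches the same closed form by a more elementary route than the paper. The paper's argument for Part 2 is information-theoretic: for each pair $(x,y)$ it forms the empirical label distribution $p_U(U{=}1)=\frac{N^+\widetilde{\mathbb P}(x,y)}{N^+\widetilde{\mathbb P}(x,y)+N^-\widetilde{\mathbb Q}(x,y)}$, writes the cross-entropy loss as $\ell={\rm H}(p_U)+{\rm KL}(p_U\|p_{U|XY})$, and concludes that the minimum is attained exactly when the classifier output $\sigma(s(x,y))$ equals this empirical posterior, with uniqueness inferred from convexity (Part 1). Your per-coordinate decomposition $\ell=\sum_{x,y}\ell_{xy}(s(x,y))$ with $\ell_{xy}'(t)=(a+b)\sigma(t)-a$ is the same computation done by hand: the first-order condition $\sigma(t)=a/(a+b)$ is precisely the matching condition $p_{U|XY}=p_U$. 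What your version buys is a cleaner account of the degenerate entries, which the paper silently ignores: you correctly observe that for pairs with $\widetilde{\mathbb P}(x,y)=0<\widetilde{\mathbb Q}(x,y)$ the infimum is approached only as $s(x,y)\to-\infty$ (so the stated $s^*$ must be read in the extended reals and ``unique minimizer'' holds only on ${\rm Supp}(\widetilde{\mathbb P})$, or after restricting to ${\rm Supp}(\widetilde{\mathbb Q})$), and that entries outside ${\rm Supp}(\widetilde{\mathbb Q})$ are entirely unconstrained. You also make explicit where the covering hypothesis enters (ruling out $\widetilde{\mathbb P}(x,y)>0=\widetilde{\mathbb Q}(x,y)$, which would destroy attainability on the positive support) and what ``sufficiently expressive'' must mean (surjectivity of $(f,g)\mapsto(\langle f(x),g(y)\rangle)_{x,y}$ onto score matrices); both points are used but left implicit in the paper. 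In short: same result, essentially equivalent mechanics, with your write-up being the more rigorous of the two on the boundary cases.
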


\begin{proof}
 The first part is straightforward by computing the second derivative:
	\begin{equation*}
	\frac{\partial^2 \left(-\log  \sigma (s)\right)}{\partial s^2}=\frac{\partial  \left(\sigma(s)-1\right)}{\partial s}= \sigma(s) \cdot \left(1-\sigma(s)\right) \geqslant 0.
	\end{equation*}
	Thus, $-\log  \sigma \left(s\right)$ is a convex function in $s$, and we can prove $-\log  \sigma \left(-s\right)$ is also a convex function following the same way. Since the summation of the convex functions is still a convex function, the loss $\ell$ is convex in $s$. 

 For the second part, we know that the size of ${\cal D}^+$ is ${N^+}\widetilde{\mathbb P}(x,y)$ and the size of ${\cal D}^-$ is $ {N^-}\widetilde{\mathbb Q}(x,y)$. Therefore, the prior distribution of the binary random variable $U$ is:
	\begin{eqnarray}
	\label{eq:pu}
	p_U(U=1)=\frac{{N^+}\widetilde{\mathbb P}(x,y)}{{N^+}\widetilde{\mathbb P}(x,y)+{N^-}\widetilde{\mathbb Q}(x,y)}.
	\end{eqnarray}
	Recall that 
	\[
	\ell - {\rm H}(p_U)={\rm KL}(p_U || p_{U|XY}),
	\]
	 where ${\rm H}(p_U)$ is the entropy of $p_U$ and ${\rm KL}(p_U || p_{U|XY})$ is the Kullback-Leibler divergence between $p_U$ and $p_{U|XY}$. Given ${N^+}$, $ {N^-}$, $\widetilde{\mathbb P}(x,y)$ and $\widetilde{\mathbb Q}(x,y)$, ${\rm H}(p_U)$ is a constant. In this case, 
	 \[
    \min_{f, g} \ell (f, g) = \min_{f, g} {\rm KL}(p_U || p_{U|XY}).
    \]
	 
	 Since ${\rm KL}(p_U || p_{U|XY})$ reaches the minimum value $0$ when $p_U = p_{U|XY}$, we let
		\begin{eqnarray}
		p_U(U=1)&=&p_{U|XY}(1|x,y)\notag\\\notag
		&\Longrightarrow&
		\frac{{N^+}\widetilde{\mathbb P}(x,y)}{{N^+}\widetilde{\mathbb P}(x,y)+{N^-}\widetilde{\mathbb Q}(x,y)}\\
		&= &\sigma \left(s^*(x, y)\right), 
		\end{eqnarray}
	which indicates $s^*(x, y) = \log
	\frac{\widetilde{\mathbb P}(x, y)}{\widetilde{\mathbb Q}(x, y)} + \log \frac{N^+}{N^-}$. We then know that this is the unique solution due to the convexity.
\end{proof}

Note that the second statement of the theorem does not imply that there is a unique $\left({f^*}, {g^*}\right)$ that minimizes $\ell (f, g)$. In fact, there is a continuum of $\left({f^*}, {g^*}\right)$'s which all minimize $\ell(f, g)$ equally well. 
A consequence of Theorem \ref{thm:matrixFactorization} is the following result.

\begin{cor}
\label{cor:justifyWCC}
Let $N^+=n$ and $N^-=kn$. Suppose that ${\mathbb Q}$ covers ${\mathbb P}$, and that $f$ and $g$ are sufficiently expressive.
Then it is possible to construct a distribution $\widehat{\mathbb P}$ on ${\cal X}\times {\cal Y}$ using ${f^*}, {g^*}$, $k$, and ${\mathbb Q}$ such that for every $(x, y)\in {\cal X}\times {\cal Y}$, $\widehat{\mathbb P}(x, y)$ converges to ${\mathbb P}(x, y)$ in probability as $n\rightarrow \infty$.
\end{cor}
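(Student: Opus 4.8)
The plan is to reconstruct ${\mathbb P}$ from the learned embeddings by inverting the closed form of the minimizer in Theorem~\ref{thm:matrixFactorization} and then substituting the known noise distribution ${\mathbb Q}$ for the empirical $\widetilde{\mathbb Q}$. Since $f$ and $g$ are sufficiently expressive, the inner‑product score $s(x,y)={\bf score}(f(x),g(y))$ can realize any score matrix, so $\min_{f,g}\ell(f,g)=\min_s\ell(s)$ and the minimizer $(f^*,g^*)$ from (\ref{eq:opt}) induces the score matrix $s^*(x,y):={\bf score}(f^*(x),g^*(y))$ of Theorem~\ref{thm:matrixFactorization}. With $N^+=n$ and $N^-=kn$, rearranging that formula gives, for every $(x,y)$,
\[
\widetilde{\mathbb P}(x,y)\;=\;k\,e^{s^*(x,y)}\,\widetilde{\mathbb Q}(x,y),
\]
the identity extending from ${\rm Supp}(\widetilde{\mathbb Q})$ to all of ${\cal X}\times{\cal Y}$ because coverage forces both sides to vanish off ${\rm Supp}(\widetilde{\mathbb Q})$.

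I would then define the candidate distribution, using only $f^*,g^*,k$ and ${\mathbb Q}$, by mimicking this identity with ${\mathbb Q}$ in place of $\widetilde{\mathbb Q}$ and renormalizing:
\[
\widehat{\mathbb P}(x,y)\;:=\;\frac{k\,e^{s^*(x,y)}\,{\mathbb Q}(x,y)}{Z},\qquad Z\;:=\;\sum_{(x',y')\in{\cal X}\times{\cal Y}}k\,e^{s^*(x',y')}\,{\mathbb Q}(x',y').
\]
This is a genuine probability distribution on ${\cal X}\times{\cal Y}$ (with $Z>0$, since ${\mathbb Q}$ covers ${\mathbb P}$ and so is not identically zero); it is simply the plug‑in analogue of $\widetilde{\mathbb P}$, the normalizer $Z$ only correcting for the fact that $\sum_{x,y}k\,e^{s^*(x,y)}{\mathbb Q}(x,y)$ need not be exactly $1$ --- it would be if ${\mathbb Q}$ were replaced by $\widetilde{\mathbb Q}$. (The factor $k$ cancels in $\widehat{\mathbb P}$ after normalization, but it records how $\widehat{\mathbb P}$ relates to $\widetilde{\mathbb P}$.)

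Finally I would establish the convergence by the law of large numbers over the finite alphabet ${\cal X}\times{\cal Y}$. As ${\cal D}^+$ and ${\cal D}^-$ are i.i.d.\ draws from ${\mathbb P}$ and ${\mathbb Q}$ with $N^+=n\to\infty$ and $N^-=kn\to\infty$, we have $\widetilde{\mathbb P}(x,y)\xrightarrow{p}{\mathbb P}(x,y)$ and $\widetilde{\mathbb Q}(x,y)\xrightarrow{p}{\mathbb Q}(x,y)$ for every $(x,y)$. Fix $(x,y)$ with ${\mathbb Q}(x,y)>0$: then $\widetilde{\mathbb Q}(x,y)$ is eventually bounded away from $0$ with probability tending to $1$, so $k\,e^{s^*(x,y)}=\widetilde{\mathbb P}(x,y)/\widetilde{\mathbb Q}(x,y)\xrightarrow{p}{\mathbb P}(x,y)/{\mathbb Q}(x,y)$ by the continuous mapping theorem, hence $k\,e^{s^*(x,y)}{\mathbb Q}(x,y)\xrightarrow{p}{\mathbb P}(x,y)$; summing the finitely many terms (those with ${\mathbb Q}=0$ vanish, and ${\mathbb P}$ vanishes there too by coverage) gives $Z\xrightarrow{p}\sum_{x',y'}{\mathbb P}(x',y')=1$, so $\widehat{\mathbb P}(x,y)\xrightarrow{p}{\mathbb P}(x,y)$ by Slutsky's theorem. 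For $(x,y)$ with ${\mathbb Q}(x,y)=0$ we have $\widehat{\mathbb P}(x,y)=0={\mathbb P}(x,y)$ identically, again by coverage. The one genuinely delicate point is the bookkeeping around supports: Theorem~\ref{thm:matrixFactorization} pins $s^*(x,y)$ down only where $\widetilde{\mathbb Q}(x,y)>0$, so on the event that some mass point of ${\mathbb Q}$ is missed by ${\cal D}^-$ the construction must be completed by an arbitrary convention; since $\widetilde{\mathbb Q}(x,y)\xrightarrow{p}{\mathbb Q}(x,y)>0$ for each such point, that event has probability $\to 0$ and so does not affect convergence in probability. Everything else is a routine law‑of‑large‑numbers‑plus‑continuous‑mapping argument.
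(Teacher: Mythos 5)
Your proposal is correct and follows essentially the same route as the paper's own proof: both invert the closed form $s^*(x,y)=\log\frac{\widetilde{\mathbb P}(x,y)}{\widetilde{\mathbb Q}(x,y)}-\log k$ from Theorem~\ref{thm:matrixFactorization} to define $\widehat{\mathbb P}$ proportional to $k\,e^{s^*(x,y)}\,{\mathbb Q}(x,y)$, and then conclude via the weak law of large numbers applied to $\widetilde{\mathbb P}$ and $\widetilde{\mathbb Q}$. Your normalization by $Z$ and your explicit treatment of the support edge cases are minor refinements that the paper glosses over (its unnormalized $\widehat{\mathbb P}=\widetilde{\mathbb P}\cdot{\mathbb Q}/\widetilde{\mathbb Q}$ need not sum to one at finite $n$), but they do not change the substance of the argument.
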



\begin{proof}

Suppose ${f^*}, {g^*}$, $k$, and ${\mathbb Q}$ are all known. Recall that $s^*(x, y) = \log
\frac{\widetilde{\mathbb P}(x, y)}{\widetilde{\mathbb Q}(x, y)} - \log k$ and $s^*(x,y)=\langle f^*(x), g^*(y)\rangle$. We then reconstruct $\widehat{\mathbb P}$ as
\begin{equation}
\begin{split}
\widehat{\mathbb P}(x,y) &= \underbrace{\exp \left\{ \langle f^*(x), g^*(y)\rangle + \log k \right\}}_{A(x,y)} \cdot \mathbb{Q}(x,y)\\
&=A(x,y) \cdot \widetilde{\mathbb Q}(x,y) \cdot \frac{\mathbb{Q}(x,y)}{\widetilde{\mathbb Q}(x,y)}\\
&=\widetilde{\mathbb P}(x,y)\cdot \frac{\mathbb{Q}(x,y)}{\widetilde{\mathbb Q}(x,y)}\\
&=\mathbb{P}(x,y) \cdot \frac{\widetilde{\mathbb P}(x,y)}{\mathbb{P}(x,y)} \cdot \frac{\mathbb{Q}(x,y)}{\widetilde{\mathbb Q}(x,y)}.
\end{split}
\end{equation}
According to the weak law of large numbers, $\widetilde{\mathbb P}(x, y)$ converges to ${\mathbb P}(x, y)$ in probability as $n\rightarrow \infty$ and $\widetilde{\mathbb Q}(x, y)$ converges to ${\mathbb Q}(x, y)$ in probability as $n\rightarrow \infty$. Thus, for every $(x, y)\in {\cal X}\times {\cal Y}$, $\widehat{\mathbb P}(x, y)$ converges to ${\mathbb P}(x, y)$ in probability as $n\rightarrow \infty$
\end{proof}

We note that Corollary \ref{cor:justifyWCC} justifies the WCC framework, since under the condition of the corollary, one can reconstruct the unknown data distribution ${\mathbb P}$
from the learned embedding maps ${f^*}$ and ${g^*}$ without referring to the samples ${\cal D}^+$ and ${\cal D}^-$. That is, for sufficiently large sample sizes $N^+$ and $N^-$, the learned embedding maps 
${f^*}$ and ${g^*}$ results in virtually no loss of the information contained in the data distribution ${\mathbb P}$.

There are other consequences of Theorem \ref{thm:matrixFactorization}, which we postpone to discuss in a later section. For later use, we present another result.

\begin{lem} 
\label{lem:Gradient}
The derivative of the loss function $\ell$ with respect to $s(x, y)$ is
\[
\frac{\partial \ell}{\partial s(x, y)} \!\!=\!\!
\sigma \left(s(x, y)\right) (N^- \widetilde{\mathbb Q}(x, y)
-e^{-s(x, y)} N^+ \widetilde{\mathbb P}(x, y)   
).
\]
\end{lem}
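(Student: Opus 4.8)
The plan is to compute the partial derivative directly from the loss in (\ref{eq:loss}), treating $\ell$ as a function of the score matrix $s$. First I would rewrite the two sums over data points as weighted sums over \emph{distinct} pairs: since the pair $(x,y)$ occurs $\#(x,y) = N^+\widetilde{\mathbb P}(x,y)$ times in ${\cal D}^+$ and $N^-\widetilde{\mathbb Q}(x,y)$ times in ${\cal D}^-$, we may write
\[
\ell = -\sum_{(x,y)} N^+\widetilde{\mathbb P}(x,y)\log\sigma(s(x,y)) - \sum_{(x,y)} N^-\widetilde{\mathbb Q}(x,y)\log\sigma(-s(x,y)),
\]
so that each entry $s(x,y)$ of the matrix appears in exactly two of the terms and the rest are constant with respect to it.

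Next I would differentiate these two terms. Reusing the computation already done in the proof of Theorem~\ref{thm:matrixFactorization}, we have $\frac{d}{ds}\left(-\log\sigma(s)\right) = \sigma(s) - 1$, and an analogous chain-rule calculation gives $\frac{d}{ds}\left(-\log\sigma(-s)\right) = \sigma(s)$. Hence
\[
\frac{\partial \ell}{\partial s(x,y)} = N^+\widetilde{\mathbb P}(x,y)\bigl(\sigma(s(x,y)) - 1\bigr) + N^-\widetilde{\mathbb Q}(x,y)\,\sigma(s(x,y)).
\]

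Finally, to bring this into the stated form I would use the elementary identity $1 - \sigma(s) = \sigma(-s) = e^{-s}\sigma(s)$, which lets me replace the factor $\sigma(s(x,y)) - 1$ by $-e^{-s(x,y)}\sigma(s(x,y))$; factoring out $\sigma(s(x,y))$ then yields exactly
\[
\frac{\partial \ell}{\partial s(x,y)} = \sigma(s(x,y))\bigl(N^-\widetilde{\mathbb Q}(x,y) - e^{-s(x,y)} N^+\widetilde{\mathbb P}(x,y)\bigr).
\]
There is no real obstacle here — the argument is a short, routine calculation; the only point requiring a bit of care is the bookkeeping in the first step, making sure that repeated occurrences of a pair in ${\cal D}^+$ and ${\cal D}^-$ are counted correctly so that the multiplicities $N^+\widetilde{\mathbb P}(x,y)$ and $N^-\widetilde{\mathbb Q}(x,y)$ emerge.
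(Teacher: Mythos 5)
Your proof is correct: the grouping of the two sums by distinct pairs with multiplicities $N^+\widetilde{\mathbb P}(x,y)$ and $N^-\widetilde{\mathbb Q}(x,y)$, the derivatives $\frac{d}{ds}(-\log\sigma(s))=\sigma(s)-1$ and $\frac{d}{ds}(-\log\sigma(-s))=\sigma(s)$, and the identity $1-\sigma(s)=e^{-s}\sigma(s)$ all check out and yield exactly the stated formula. The paper itself states Lemma~\ref{lem:Gradient} without proof, so there is nothing to compare against, but your calculation is the routine one the authors evidently intended and is consistent with the derivative computations in their proof of Theorem~\ref{thm:matrixFactorization}.
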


\subsection{Different Forms of Noise Distribution ${\mathbb Q}$} \label{sec:3-4}

From the formulation of the WCC, it is apparent that the choice of the noise distribution ${\mathbb Q}$ has an effect on this classifier learning problem and possibly affects the quality and training of the word embeddings.  We now discuss various options in selecting the noise distribution ${\mathbb Q}$, resulting in different versions of the SkipGram models. These models will all be referred to SGN models, for the ease of reference.  

\subsubsection{SGN Model}
Let ${\mathbb Q}$ factorize in the following form
\begin{equation}
\label{eq:uncondSGN}
 {\mathbb Q}(x, y) = \widetilde{\mathbb P}_{\cal X}(x){\mathbb Q}_{\cal Y}(y).
\end{equation}

In such a model, the noise context does not depend on the noise center word. Thus, we may also refer to such a model as an unconditional SGN model. 
The standard SGN model of \cite{MikolovSCCD13}, which we will refer to as ``Vanilla SGN'',  can be regarded as an unconditional SGN in which ${\mathbb Q}_{\cal Y}$ takes a particular form (see later).

The following result follows from Theorem \ref{thm:matrixFactorization}.
\begin{cor}
\label{cor:pmi}
In an unconditional SGN model, suppose that $f$ and $g$ are sufficiently expressive. Let $N^+\!\!\!=n$ and $N^-\!\!\!=kn$. Then
the global minimizer of loss function (\ref{eq:loss}) is given by
\[
\label{eq:uncondOptScore}
    s^*(x,y)=\overline{x} \cdot \overline{y}=\log
    \frac{\widetilde{\mathbb P}(x, y)}{\widetilde{{\mathbb P}}_{\cal X}(x) \widetilde{{\mathbb Q}}_{\cal Y}(y)} - \log k.
\]
\end{cor}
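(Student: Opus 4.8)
The plan is to obtain Corollary~\ref{cor:pmi} as a direct specialization of Theorem~\ref{thm:matrixFactorization} to the factored noise distribution in~(\ref{eq:uncondSGN}). There is essentially no new content: one only rewrites the two terms of the general formula for $s^*$ under the unconditional assumption and under the bookkeeping $N^+=n$, $N^-=kn$, and then reads off the score as an inner product of embeddings.

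Concretely, I would proceed as follows. First, invoke Theorem~\ref{thm:matrixFactorization}: since $f$ and $g$ are sufficiently expressive, and provided the noise context marginal $\mathbb{Q}_{\cal Y}$ places positive mass on every context occurring in ${\cal D}^+$ so that $\widetilde{\mathbb Q}$ covers $\widetilde{\mathbb P}$, the unique global minimizer of $\ell$ as a score matrix is $s^*(x,y)=\log\frac{\widetilde{\mathbb P}(x,y)}{\widetilde{\mathbb Q}(x,y)}+\log\frac{N^+}{N^-}$. Second, substitute $N^+=n$ and $N^-=kn$, so that $\log\frac{N^+}{N^-}=-\log k$. Third, use the unconditional form~(\ref{eq:uncondSGN}): because in an unconditional SGN model each noise pair reuses a center word drawn with the empirical word frequency $\widetilde{\mathbb P}_{\cal X}$ and an independently drawn noise context, the empirical noise distribution factors as $\widetilde{\mathbb Q}(x,y)=\widetilde{\mathbb P}_{\cal X}(x)\,\widetilde{\mathbb Q}_{\cal Y}(y)$, where $\widetilde{\mathbb Q}_{\cal Y}$ is the empirical context marginal of ${\cal D}^-$; plugging this in yields $s^*(x,y)=\log\frac{\widetilde{\mathbb P}(x,y)}{\widetilde{\mathbb P}_{\cal X}(x)\,\widetilde{\mathbb Q}_{\cal Y}(y)}-\log k$. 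Finally, since ${\bf score}(\cdot)$ has been fixed to be the inner product on a common embedding space, writing $\overline{x}=f^*(x)$ and $\overline{y}=g^*(y)$ we have $s^*(x,y)=\overline{x}\cdot\overline{y}$, which gives the claimed identity.

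The step deserving the most care — though it is still close to routine — is the factorization $\widetilde{\mathbb Q}(x,y)=\widetilde{\mathbb P}_{\cal X}(x)\,\widetilde{\mathbb Q}_{\cal Y}(y)$ of the \emph{empirical} noise distribution. This holds exactly under the standard negative-sampling recipe, in which one attaches to each positive occurrence of a center word $x$ exactly $k$ noise contexts sampled i.i.d.\ from $\mathbb{Q}_{\cal Y}$; one should therefore state the corollary under this sampling convention, or alternatively pass to the $n\to\infty$ limit as in Corollary~\ref{cor:justifyWCC} and replace the empirical distributions by their population counterparts ($\widetilde{\mathbb P}_{\cal X}\to\mathbb{P}_{\cal X}$, $\widetilde{\mathbb Q}_{\cal Y}\to\mathbb{Q}_{\cal Y}$). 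A secondary check is that the expressiveness hypothesis is strong enough for the inner-product parameterization to realize the optimal score matrix $s^*$ of Theorem~\ref{thm:matrixFactorization}; this is precisely the ``sufficiently expressive $f,g$'' assumption (e.g.\ embedding dimension at least $\min(|{\cal X}|,|{\cal Y}|)$), so no further argument is needed.
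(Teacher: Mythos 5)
Your proposal is correct and follows exactly the route the paper intends: Corollary~\ref{cor:pmi} is stated as a direct specialization of Theorem~\ref{thm:matrixFactorization} with $\log\frac{N^+}{N^-}=-\log k$ and the factored noise distribution of~(\ref{eq:uncondSGN}) substituted into the general formula, and the paper offers no further argument. Your added caveat about when the \emph{empirical} noise distribution factorizes exactly as $\widetilde{\mathbb Q}(x,y)=\widetilde{\mathbb P}_{\cal X}(x)\,\widetilde{\mathbb Q}_{\cal Y}(y)$ (versus only in the large-sample limit) is a point the paper glosses over, so your treatment is if anything slightly more careful than the original.
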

As a special case of Corollary \ref{eq:uncondOptScore}, when $\widetilde{{\mathbb Q}}_{\cal Y}=\widetilde{\mathbb P}_{\cal Y}$,   the term $\log
    \frac{\widetilde{\mathbb P}(x, y)}{\widetilde{{\mathbb P}}_{\cal X}(x) \widetilde{{\mathbb Q}}_{\cal Y}(y)}$ is called ``pointwise mutual information'' (PMI) \cite{LevyG14}. In this case, the above corollary states that the WCC learning problem can be regarded as implicitly factorizing a ``shifted PMI matrix'', recovering the well-known result of \cite{LevyG14} on SkipGram word embedding models. 

It is natural to consider the following forms of ${\mathbb Q}_{\cal Y}$ in unconditional SGN. 

\begin{enumerate}
    \item {\bf ``uniform SGN'' (ufSGN)}: Let ${\mathbb{Q} }_{\cal Y}$ be the discrete uniform distribution over ${\cal Y}$, i.e. ${\mathbb Q}_{\cal Y}(y)=1/{|{\cal Y}|}$.
    \item {\bf ``unigram SGN'' (ugSGN)}: Let ${\mathbb Q}_{\cal Y}$ be the empirical distribution ${\mathbb P}_{\cal Y}$ of the context word in the corpus, that is, ${\mathbb Q}_{\cal Y}(y) = f_y/\sum_{y\in {\cal Y}}{f_y}$, where $f_y$ is the frequency at which the context word $y$ has occurred in the corpus.
    \item {\bf ``3/4-unigram SGN'' (3/4-ugSGN)}: Let ${\mathbb Q}_{\cal Y}$ be defined by ${\mathbb Q}_{\cal Y}(y) = f_y^{3/4}/\sum_{y\in {\cal Y}}{f_y^{3/4}}$. This is precisely the noise distribution used in vanilla SGN \cite{MikolovSCCD13}.
\end{enumerate}

\subsubsection{Conditional SGN}
In this case, we factorize ${\mathbb Q}$ as
\[
 {\mathbb Q}(x, y) = \widetilde{\mathbb P}_{\cal X}(x) {\mathbb Q}_{{\cal Y}|x}(y),
\]
where ${\mathbb Q}_{{\cal Y}|x}(\cdot)$ varies with $x$. This form of ${\mathbb Q}$ includes all possible distributions ${\mathbb Q}$ whose marginals ${\mathbb Q}_{\cal X}$ on the center word are the same as $\widetilde{\mathbb P}_{\cal X}$. Specifically, if we take ${\mathbb Q}_{{\cal Y}|x}$ as $\widetilde{\mathbb P}_{{\cal Y}|x}$, then ${\mathbb Q}$ is $\widetilde{\mathbb P}$. Before proceeding, we make the following remark.

\noindent{\bf Remark 1.} \emph{In Theorem \ref{thm:matrixFactorization} and Corollary \ref{cor:justifyWCC}, the WCC framework is justified for any choice of empirical noise distribution $\widetilde{\mathbb Q}$ that covers $\widetilde{\mathbb P}$. Consider some $(x, y) \in {\rm Supp}\left( \widetilde{\mathbb Q} \right) \setminus {\rm Supp}\left( \widetilde{\mathbb P} \right)$, namely, $(x, y)$ is ``covered'' by $\widetilde{\mathbb Q}$ but not by $\widetilde{\mathbb P}$. By Lemma \ref{lem:Gradient}, the gradient is
\begin{eqnarray*}
\frac{\partial \ell}{\partial s(x, y)} 
 & = & 
 \sigma \left(s(x, y)\right)\cdot 
 N^- \widetilde{\mathbb Q}(x, y).
\end{eqnarray*}
That is, the gradient signal contains only one term, which is used to update the representation $(f(x), g(y))$ for the pair $(x, y)$, which is outside the positive examples. Although the gradient signal updates the embedding $f(x)$, the training aims to make the classifier sensitive to negative examples (making it have a low loss), without contributing to differentiating the negative examples from the positive ones (namely, without aiming at reducing the loss for positive examples).  Such a direction of training, not necessarily ``wrong'', somewhat deviates from the training objective (i.e., reducing the loss for both positive and negative examples). In the case when the ${\rm Supp}\left( \widetilde{\mathbb Q} \right)$ of 
$\widetilde{\mathbb Q}$ contains the support ${\rm Supp}\left( \widetilde{\mathbb  P} \right)$ of $\widetilde{\mathbb P}$ but is much larger, there is a significant fraction of such negative examples outside ${\rm Supp}\left( \widetilde{\mathbb P}\right)$. This may result in slow training.} 

Thus in the sense of efficiently learning word embeddings in the WCC framework, we have the following hypothesis:

\begin{hypo} 
\label{hp:QequalP}
 The best $\widetilde{\mathbb Q}$ is the one that barely covers $\widetilde{\mathbb P}$, namely, equal to $\widetilde{\mathbb P}$.
\end{hypo}

\noindent{\bf Remark 2.} \emph{It is important to note that in order to achieve better performance on some NLP downstream tasks, $\widetilde{\mathbb Q}$ should not be exactly $\widetilde{\mathbb P}$ during the whole training phase. This is because target words in these tasks may not frequently appear in the training corpus, and if they are rarely trained, the learned embeddings will not be able to give a good performance. It turns out that we usually apply the sub-sampling technique and try to improve the entropy of $\widetilde{\mathbb Q}$ (e.g., using ``3/4-unigram" instead of ``unigram" \cite{MikolovSCCD13} ) in practice.}

Under this hypothesis, we wish to choose ${\mathbb Q}_{{\cal Y}|x}$ to be equal to, or at least closely resemble, $\widetilde{\mathbb P}_{{\cal Y}|x}$.

This choice, however, entails nearly prohibitively high computational complexity for optimization using mini-batched SGD. This is because for each word-context pair in a mini-batch, the context word needs to be sampled from its own distribution $\widetilde{\mathbb P}_{{\cal Y}|x}$, depending on the center word $x$. Such a sampling scheme is not parallelizable within the mini-batch, under current deep learning libraries. 

In the next subsection, we will present a solution that can bypass this complexity. 

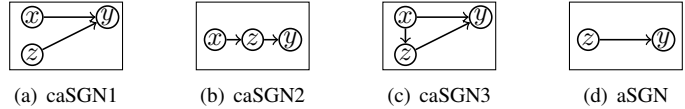
\begin{figure}[htbp]
\centering
    \tikzstyle{style_x}=[draw,semithick, shape=circle,minimum size=12,inner sep=0.1pt, black]
    \tikzstyle{style_z}=[draw,semithick, shape=circle,minimum size=12,inner sep=0.1pt, black]
    \tikzstyle{style_y}=[draw,semithick, shape=circle,minimum size=12,inner sep=0.1pt, black]
    \tikzstyle{line}=[semithick, <-]
   
    \subfigure [caSGN1] {\label{fig:c1} 
    \begin{tikzpicture}[scale=.95]
        \node[style_x] (x1) {$x$};
        \node[style_z] (z1) [below of=x1, node distance=1cm] { $z$};
        \node[style_y] (y1) [right of=x1, node distance=1.5cm] {$y$}
        edge [line] (x1)
        edge [line] (z1);
        \draw[draw=black] (-0.6,-1.4) rectangle ++(3,2);
    \end{tikzpicture}
    }
    \qquad
    \subfigure [caSGN2] {\label{fig:c2} 
    \begin{tikzpicture}[scale=.95]
        \node[style_x] (x2) {$x$};
        \node[style_z] (z2) [right of=x2, node distance=1cm] { $z$}
        edge [line] (x2);
        \node[style_y] (y2) [right of=z2, node distance=1cm] {$y$}
        edge [line] (z2);
        \draw[draw=black] (-0.5,-0.9) rectangle ++(3,2);
    \end{tikzpicture}
    }
    \qquad
    \subfigure [caSGN3]{ \label{fig:c3} 
    \begin{tikzpicture}[scale=.95]
        \node[style_x] (x3) {$x$};
        \node[style_z] (z3) [below of=x3, node distance=1cm] { $z$}
        edge [line] (x3);
        \node[style_y] (y3) [right of=x3, node distance=1.5cm] {$y$}
        edge [line] (x3)
        edge [line] (z3);
        \draw[draw=black] (-0.6,-1.4) rectangle ++(3,2);
    \end{tikzpicture}
    }
    \qquad
    \subfigure [aSGN] {\label{fig:u} 
    \begin{tikzpicture}[scale=.95]
        \node[style_x] (z0) {$z$};
        \node[style_y] (y0) [right of=z0, node distance=1cm] {$y$}
        edge [line] (z0);
        \draw[draw=black] (-0.5,-0.9) rectangle ++(3,2);
    \end{tikzpicture}
    }
\caption{Generators of the adaptive SkipGram model}
\label{fig:generator}
\end{figure}

\subsubsection{Conditional Adaptive SGN (caSGN)} Now we consider a version of $\left\{\widetilde{\mathbb Q}^t_{{\cal Y}|x}:x\in {\cal X}\right\}$ that varies with the training iteration $t$. We consider training based on mini-batched SGD, where each batch is defined as a random collection of positive examples together with their corresponding negative examples (which are $k$ times many). We take $t$ as the index of a batch.  Suppose training is such that the loss computed for a batch converges and that $\widetilde{\mathbb Q}^t_{{\cal Y}|x}$ converges to $\widetilde{\mathbb P}_{{\cal Y}|x}$ for each $x\in {\cal X}$. Let $T$ be the total number of training iterations (largest batch number). The empirical distribution, say $\widehat{\mathbb Q}^T$, of the noise word-context pair seen during the entire training process is then $\widehat{\mathbb Q}^T(x,y)\!\!=\!\!\sum\limits_{t=1}^T\widetilde{\mathbb Q}^t_{{\cal Y}|x}(y)\widetilde{\mathbb P}_{\cal X}(x)/T$. Under the assumptions stated above, it is easy to see that $\widehat{\mathbb Q}^T$ must converge to $\widetilde{\mathbb P}$ with increasing $T$. Thus, when $T$ is large enough, we can regard
training as a version of mini-batched SGD with ${\mathbb Q}$ chosen as a distribution arbitrarily close to $\widetilde{\mathbb P}$, or a conditional SGN with ${\mathbb Q}_{{\cal Y}|x}$ arbitrarily close to $\widetilde{\mathbb P}_{{\cal Y}|x}$.

This observation motivates us to design the ``Conditional Adaptive SGN'' (caSGN) model. The idea is to parameterize $\widetilde{\mathbb Q}_{{\cal Y}|x}$ using a neural network and force learning with mini-batched SGD 
to make $\widetilde{\mathbb Q}_{{\cal Y}|x}$ converge to $\widetilde{\mathbb P}_{{\cal Y}|x}$. Inspired by GAN \cite
{GoodfellowPMXWOCB14}, we parametrize 
$\widetilde{\mathbb Q}_{{\cal Y}|x}$ 
using a latent variable $Z$ that takes values from a vector space ${\cal Z}$, and 
model $Y$ as being {\em generated} from $(X, Z)$. Figure \ref{fig:generator} (a-c) shows three structures of such a generator. 
Each generator can be implemented as a probabilistic neural network $G$. Then one can formulate the loss function in a way similar to GAN, e.g. in caSGN3 (Figure \ref{fig:c3}), 
\[
\begin{split}
\ell_{\rm caSGN3}=  - 
{\mathbb E}_{x\sim \widetilde{\mathbb P}_{\cal X}} 
\left\{
{\mathbb E}_{y\sim \widetilde{\mathbb P}_{{\cal Y}|x}} \log 
\sigma (s(x, y)) \right.\\
+ \left.{\mathbb E}_{z\sim G_{X|Z}(x),
y\sim G_{Y|XZ}(x, z)} \log 
(-\sigma (s(x, y)))
\right\}
\end{split}
\]
The min-max optimization problem is defined as
\begin{equation}
\label{eq:minMax}
    \left(f^*, g^*, G^*\right):= \arg \min_{f, g} 
    \max_G \ell_{\rm caSGN3} (f, g, G)
\end{equation}
where $G_{Z|X}$ and $G_{Y|XZ}$ are parts of the network $G$.  Following the same derivation as in GAN, the distribution of $(X, Y)$ induced by $G^*$ is $\widetilde{\mathbb Q}$, and optimizing (\ref{eq:minMax}) using mini-batched SGD forces $\widetilde{\mathbb Q}$ converge to $\widetilde{\mathbb P}$.
Note that caSGN1 and caSGN2 in Figure \ref{fig:generator} are special cases of caSGN3, where an additional factorization constraint is applied.

\subsubsection{Unconditional Adaptive SGN (aSGN)} In this model, we simplify the generator so that $Y$ depends only on $Z$ as shown in Figure \ref{fig:u}. Since in every language, the context always depends on the center word, using such a generator, $\widetilde{\mathbb Q}$ tends not to converge to $\widetilde{\mathbb P}$ by construction, except for a very small training sample, to whichthe  model over-fits. 

\subsubsection{ACE} 
ACE \cite{BoseLC18} can also be regarded as a WCC model
with an adaptive noise distribution. In particular, it can be regarded as a special case of caSGN1 or caSGN2, where $Y$ depends only on $X$.

\section{Experiments}\label{experiments}

The main objective of our experiments is to investigate the effect of different forms of noise distribution ${\mathbb Q}$ in the WCC models on the training of embedding and the achieved performance thereby. The approach taken in this study is first to train word embeddings using a given corpus, and then to evaluate the quality of the embeddings using a set of downstream benchmark datasets.

\subsection{Experimental Setup}

\subsubsection{Corpus}
We utilize a Wikipedia dump as our training corpus, following standard text preprocessing procedures including lower-casing and removal of non-lexical items. The processed corpus contains 1.1 billion tokens, with a vocabulary of 153,378 most frequent words. 
Apart from the Wikipedia dump (wiki) used in the main paper, the small corpus text8 \cite{mahoney2011large} is also used, which contains $17$M English words and is pre-processed by lower-casing the text and removing non-lexical items. Words appearing less than 5 times are removed, giving rise to a vocabulary of  $71,290$ unique words. A benefit of adopting this small corpus is to enable repeated runs of training under different random seeds so as to arrive at reproducible results with good confidence.  


\begin{table}[!htbp]
\centering
\caption{Statistics of downstream benchmark datasets}

\setlength{\tabcolsep}{5mm}{

\label{tab:stat}

\begin{tabular}{c|c|c|c}
  \toprule
  \multirow{2}*{Dataset}&\multirow{2}*{Size}&\multicolumn{2}{c}{Covered by}\\
    \cline{3-4}
  && text8 & wiki\\
  \midrule
 WS & 353 &351 &353\\
 SIM & 203 &202 &203\\
 REL & 252 &251 &252\\ 
 RW & 2034 & 951 & 1179\\ 
MTurk287 & 287 &284 &285\\
MTurk771 & 771 &769 &771\\
 MEN & 3000  &2987 &3000\\ 
 RG & 65  &65 &65\\
 MC & 30  &30 &30\\ \midrule
 SimLex-999 & 999  &992 &998\\
  \midrule
  Google Analogy &19544 &17827 &19364\\
  \bottomrule
\end{tabular}
}
\end{table}


\subsubsection{Evaluation Benchmarks}
We employ three types of downstream tasks to comprehensively evaluate embedding quality:

\textbf{Word Similarity}: 
This task is to predict the similarity of a pair of words. A dataset for this task is a collection of records, each containing a word pair and a human-assigned similarity score. When using this task to evaluate a word-embedding model, one computes the cosine similarity for the learned embeddings of the two words in each record. The Spearman correlation coefficient, $\rho$, between the model-predicted similarities and human-assigned scores are then used to evaluate the model. Ten popular data sets used in this study are WS \cite{FinkelsteinGMRSWR02}, WS-SIM, WS-REL \cite{AgirreAHKPS09}; RW \cite{LuongSM13}, MT287 \cite{RadinskyAGM11}, MT771 \cite{halawi2012large}, MEN \cite{BruniBBT12}, RG \cite{RubensteinG65}, MC \cite{miller1991contextual}, and SimLex \cite{hill2015simlex}.


\textbf{Word Analogy}: 
In this task, each record in the dataset is a sequence of four words $(w_1, w_2, w_3, w_4)$ indicating ``$w_1$ is to $w_2$ as $w_3$ is to $w_4$'', and the objective of the task is to predict $w_4$ using the learned embedding $f$. One computes the cosine similarity between $\left( f(w_2)-f(w_1)+f(w_3)\right)$ and the learned embedding of each candidate word in the vocabulary and then selects the word that has the minimum cosine similarity value as $w_4$. The prediction accuracy is the percentage of questions whose $w_4$ is predicted correctly. 
Google’s Analogy dataset, consisting of a Semantic subset and a Syntactic subset \cite{W2V} is used in this task.

\textbf{Named entity recognition (NER)}:
We select the NER task as a real NLP application to compare the WCC models. The CoNLL-2003 English benchmark data set \cite{SangM03} is used for this task, which consists of Reuters newswire articles that are tagged with four entity types (person, location, organization, and miscellaneous). The objective of this task is to identify these entities in the input text. With the learned word embeddings as input, we trained a CNN-BiLSTM-CRF model described in \cite{ma2016end}. 

Note that not all the words in these data sets are included in the vocabulary of the corpus. The number of word pairs in each dataset that are covered by each training corpus is shown in Table \ref{tab:stat}. 

\subsubsection{Compared Models}

\noindent\textbf{SGN} 
We implement the SkipGram model with the default architecture in \cite{MikolovSCCD13}.

\noindent\textbf{aSGN} 
The generator of aSGN is a single multilayer
perceptron (MLP). The input to this generator is a $100$ dimensional latent vector drawn from a standard normal distribution. The generator contains a hidden layer with $512$ dimensions and an output softmax layer that defines the categorical distribution over all candidate context words. There is a ReLU layer between the hidden layer and the output layer.

\noindent\textbf{caSGN1} 
For the first conditional version, for each center word $x$, we concatenate its embedding vector $f(x)$ with a latent vector $z$ drawn from the standard normal distribution as input to the generator. The remaining part uses the same structure as aSGN.

\noindent\textbf{caSGN2} 
Instead of drawing a latent vector independent of $x$ as in caSGN1, we construct a Gaussian distribution whose mean $\mu_x$ and variance $\sigma_x$ both depend on the center word vector $f(x)$. Specifically, we use two Linear-ReLU-Linear structures for $\mu_x$ and $\sigma_x$ respectively and they share the first $512$-dimension linear layer. Then we sample a latent vector from the Gaussian distribution described by $\mu_x$ and $\sigma_x$. The output layer of this generator is again a linear softmax layer.

\noindent\textbf{caSGN3} 
The last version combines the features of caSGN1 and caSGN2. We pick a latent vector from a Gaussian distribution as in caSGN2, and then we concatenate the latent vector with the center word vector before it moves to the next layer as in caSGN1. The remainder of the generator consists of a linear hidden layer and an output layer. Again, there is a non-linear layer ReLU between the hidden layer and the output layer. 

\noindent\textbf{ACE} 
We implement ACE according to \cite{BoseLC18}, except that the NCE \cite{GutmannH12} negative sampler in the model is removed, for the purpose of fair comparison. 

 It is important to clarify that our experimental comparisons focus specifically on analyzing the impact of noise distributions within the WCC framework, rather than pursuing state-of-the-art performance against contemporary large-scale embedding models. While recent transformer-based models \cite{devlin2019bert,tao2024llms,leenv} have demonstrated superior performance through massive pre-training and architectural complexity, such comparisons would be orthogonal to our primary research objective: understanding the fundamental role of noise distribution in embedding learning. Our controlled comparison among WCC variants provides cleaner insights into this specific mechanism, which can inform the design of more sophisticated models in future work.
  
\subsection{Implementation Details}

In this section, we report more details about implementation, including hyperparameter settings and some tricks used in experiments.



\begin{table}[!htbp]
\centering

\caption{Hyper-parameters on text8.}\label{tab:test8}
\setlength{\tabcolsep}{2mm}{
\begin{tabular}{c|c|c}
  \toprule
  \multirow{2}*{Parameter}&\multicolumn{2}{c}{Value}\\
    \cline{2-3}
  & Adaptive & Fixed\\
  \midrule
Learning rate of classifier & 1.0 & 1.0\\
Learning rate of sampler  & 0.05 & -\\
Batch size & 128 & 128 \\
Number of epoch & 20 & 20\\ 
Number of negative sample (k) & 1 & 5 \\ 
$\alpha$ & 20000 & -\\
Latent vector dimension & 100 & -\\
  \bottomrule
\end{tabular}
}

\end{table}

\begin{table}
\caption{Hyper-parameters on wiki.}\label{tab:wiki}
\centering
\setlength{\tabcolsep}{2mm}{
\centering
\begin{tabular}{c|c|c}
  \toprule
  \multirow{2}*{Parameter}&\multicolumn{2}{c}{Value}\\
    \cline{2-3}
  & Adaptive & Fixed\\
  \midrule
Learning rate of classifier & 0.8 & 0.8\\
Learning rate of sampler  & 0.05 & -\\
Batch size & 128 & 128 \\
Number of epoch & 3 & 3\\ 
Number of negative sample (k) & 1 & 5 \\ 
$\alpha$ & 50000 & -\\
Latent vector dimension & 100 & -\\
  \bottomrule
\end{tabular}
}

\end{table}

\subsubsection{Hyper-parameters}
In our experiment, the window size of all the WCC models is 10 so each center token has 10 positive context tokens. We use the subsampling technique to randomly eliminate the words in the corpus following the same scheme in \cite{MikolovSCCD13}. For every WCC model, there are input word embeddings for the center words and output embeddings for the context words. We run all the models trained on text8 for 12 times and report 3-run results for models trained on the Wiki due to the limitation of our computing resource. More details are given in Table.\ref{tab:test8} and Table.\ref{tab:wiki}. Note that the vanilla SGN model in our paper is trained by mini-batched SGD and is implemented via PyTorch, we do not follow the default parameter setting used in the \emph{word2vec} tool.


\subsubsection{Stochastic Node Handling} 

In our adaptive SkipGram model, the generator as an adaptive sampler produces the noise context tokens for the discriminator. The output of the generator is a non-differentiable categorical variable, so the gradient estimation trick is utilized. REINFORCE \cite{Williams92} and the Gumbel-Softmax trick \cite{JangGP17,MaddisonMT17} are two options. Although we conduct some simulations using Gumbel-Softmax, REINFORCE indeed produces slightly better results, so the comparisons of our adaptive SkipGram are all based on the REINFORCE trick. We use the variance reduction technique for the REINFORCE estimator as described in \cite{BoseLC18}, but the result is barely distinguishable from the experiment without this technique, indicating that the high variance problem is not critical here. After we obtain the mean and variance from the center word, we construct a Gaussian distribution and sample the noise from this distribution. This sampling layer is also a non-differentiable stochastic node, so we apply the reparameterization trick ~\cite{KingmaW13,RezendeMW14} to this layer.

\subsubsection{Entropy Control} 
When the generator finds a specific context word that receives a relatively high score from the discriminator for many center words, it tends to become 'lazy' and not explore other candidates. Thus, the entropy of the categorical distribution given by the generator is getting smaller and smaller during training. In this case, the discriminator cannot learn more about the true data structure, and the binary classification task is no longer challenging. To guarantee the rich diversity of tokens produced by the generator, we apply a regularizer to give the generator a high penalty when the entropy of the categorical distribution is small. The regularizer proposed by \cite{BoseLC18} uses the entropy of a prior uniform distribution as a threshold and encourages the generator to have more candidates:
\[
    \label{eq:regularizer}
     R(x) = \max(0, \log(\alpha)-H(y|x)), 
\]
where $\log(\alpha)$ is the entropy of the uniform distribution and $H(y|x)$ is the entropy of the categorical distribution defined by the generator. Indeed, the entropy control trick has already been used in previous work. For example, the prior distribution \cite{MikolovSCCD13} used is ``3/4-unigram'', and the entropy of this distribution is higher than the unigram distribution.

\subsection{Main Results}
We first show the performance of 100-dimensional word embeddings learned by vanilla SGN, ACE, and 4 our own adaptive SGN models in downstream tasks. Then we discuss the impact of the different ${\mathbb Q}$ discussed before. Notice that the results presented here are not competitive to the current state-of-the-art results, which not only need a sufficiently larger corpus and embedding dimension but a more complex structure of $f$ \footnote{In Section 3.2, $f$ is only taken by the simplest choice, but indeed it can be more complicated (e.g., Transformer).} is also required.

\begin{table*}[!htbp]
\caption{Spearman's $\rho$ ($\ast 100$) on the Word Similarity tasks.}
 \label{tab: WS-Large}
\centering
\setlength{\tabcolsep}{3mm}{
\begin{tabular}{c|c|c|c|c|c|c|c|c|c|c}
  \toprule
 {\bf Model}&  WS &  SIM & REL&MT287& MT771&RW&MEN&MC&RG&SimLex\\
    \midrule
  SGN&69.56&75.23&64.03&63.67&59.83&40.26 &69.71&64.47&70.99&30.32\\
  ACE &73.15&76.82&{\bf 69.88}&68.10&60.80&40.48 &71.08&{\bf 78.11}&77.42&30.08\\
  \midrule
  aSGN & 70.11 & 74.30 &65.08 & {\bf 69.16} & 61.72 & 41.95 & 71.09 & 68.92&69.71&32.25\\
  caSGN1 &  72.02 & 77.01 & 66.68 & 67.66 &60.36& {\bf 42.05} & 71.37 &   75.31&{\bf 77.97}&31.75\\
  caSGN2 & {\bf 73.95} & 78.27 & 69.81 & 64.86& 62.60& 41.92  &  {\bf 73.33} & 72.39&71.79&{\bf 34.01}\\
  caSGN3 & 73.80 & {\bf 78.52} & 69.17 & 65.97&{\bf 63.31}& 41.21 & 72.47 & 74.33&72.42&32.42\\
  \bottomrule
\end{tabular}
}
\end{table*}
\subsubsection{Performance on Downstream Tasks}

Table \ref{tab: WS-Large}  and \ref{tab:wa-large} show the models' performances on Word Similarity and Word Analogy, respectively. We see that each adaptive SGN model outperforms vanilla SGN on all data sets. In particular, we note that caSGN2 is able to give the highest score on Simlex, where SGN often achieves poor scores. Table \ref{tab:ner} 
compares the validation and test score $F_1$ for NER. Clearly, the improvement over SGN is statistically significant for most of our models.

\begin{table}[!htbp]
\caption{Accuracy on the Word Analogy task.}\label{tab:wa-large}
\centering
\setlength{\tabcolsep}{5mm}
{
\begin{tabular}{c|c|c|c}
  \toprule
  {\bf Model}&Semantic&Syntactic&Total\\
  \midrule
SGN & 55.03 &46.97 &50.64\\
ACE & 56.13 & 48.06&  51.74\\
\midrule
aSGN & 54.73& {\bf 49.93}& 52.11\\
caSGN1 & 55.76& 49.60& 52.40\\
caSGN2 & {\bf 58.84}& 48.94& {\bf 53.45}\\
caSGN3 & 58.40& 48.98& 53.27\\
  \bottomrule
\end{tabular}
}
\end{table}

Among the adaptive samplers (caSGN1,2,3, aSGN and ACE), caSGN3 seems to be able to capture more information than the others, but we find that caSGN2 has superior results in practice. Unfortunately, we do not have a formal justification to address this, and we leave the comparison between these models for future work. 

In addition, we notice that ACE also performs well on some tasks, but this fact should not be taken negatively on our results. As stated above, ACE is essentially an adaptive conditional member of the WCC. Its good performance further endorses some claims of our paper, namely WCC is a useful generalization and adaptive conditional schemes can be advantageous.

\begin{table}[!htbp]
\caption{Validation set and test set $F_1$ score on NER.Statistical significance difference to the baseline SGN using t-test: $\ddag$ indicates $p$-value $<  0.05$ and $\dag$ indicates $p$-value $<  0.01$.}\label{tab:ner}
\centering
\setlength{\tabcolsep}{7mm}
{
\begin{tabular}{c|l|l}
  \toprule
  {\bf Model}&Val.&Test\\
  \midrule
SGN & 93.67 &88.38\\
ACE &93.85 & 88.84$^\dag$\\
\midrule
aSGN & 93.71 & 88.62\\
caSGN1 & 93.79& 88.93$^\ddag$\\ 
caSGN2 & {\bf 94.01}$^\dag$& {\bf 89.04}$^\dag$\\ 
caSGN3 & 93.93$^\dag$& 88.88$^\ddag$\\
  \bottomrule
\end{tabular}
}
\end{table}

\subsubsection{Noise Distribution Impact Analysis}\label{sec:432}
To demonstrate Hypothesis \ref{hp:QequalP}, we show the estimated Jensen–Shannon divergence (JSD) between ${\mathbb Q}$ and ${\mathbb P}$ in Table \ref{tab:jsd}. Concretely, the objective function of the generator in GAN is taken to estimate JSD \cite{GoodfellowPMXWOCB14}. For comparison at the same scaling level, JSD of different ${\mathbb Q}$ is computed using the same embeddings learned by caSGN2. Notice that we are not able to fairly compare JSD of different adaptive ${\mathbb Q}$ in the same way, since the generators in these models are jointly trained with word embeddings, and using the same word embeddings will not enable different generators to produce valid samples. In addition, Figure \ref{fig:curve} shows the performance-varying curves on two datasets. 

\begin{table}[!htbp]
\caption{Estimated JSD between ${\mathbb Q}$ and ${\mathbb P}$.}\label{tab:jsd}
\centering
\setlength{\tabcolsep}{7mm}
{
\begin{tabular}{c|c}
 \toprule
  $\widetilde{\mathbb Q}$&Estimated JSD\\
  \midrule
ufSGN & 2.35\\
ugSGN & 0.71\\
3/4-SGN & 1.25\\ 
\midrule
caSGN2 & 1.02\\
  \bottomrule
\end{tabular}

}
\end{table}

In Table \ref{tab:jsd}, we find that the ``unigram ${\mathbb Q}$" is closest to ${\mathbb P}$ in the JSD sense but gives poor performance as shown in Figure \ref{fig:curve}. It is easy to see that ``unigram" is ``sharper'' than ``3/4-unigram", or in other words, ``unigram" has a lower entropy, which indicates the classifier of ugSGN is trained by limited frequent words in corpus most of the time. It turns out that its learned embeddings 'over-fit' those frequent words but 'under-fit' others. This does not violate Hypothesis \ref{hp:QequalP}, as discussed in Remark 2, ${\mathbb Q}$ that close to ${\mathbb P}$ cannot perform well due to the mismatch of the downstream task and the training corpus.

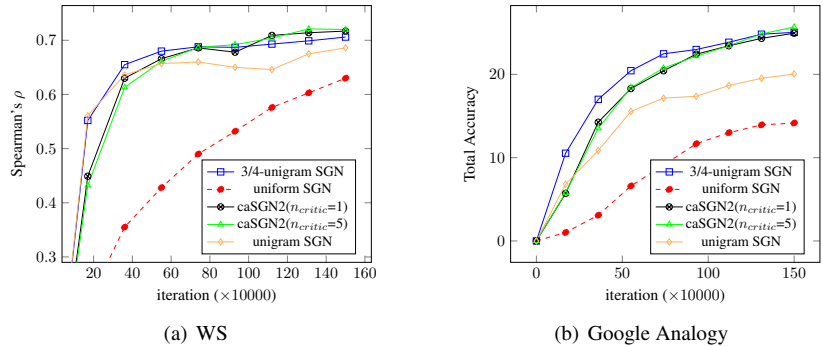
\begin{figure}[ht!]
\centering
\subfigure [{WS}] {\label{fig:aa}
\scalebox{0.48}{\begin{tikzpicture}
\begin{axis}
[legend style={at={(0.95,0.4)},nodes={scale=0.9, transform shape}},
ymin=0.29,
xlabel=iteration ($\times 10000$),
  ylabel=Spearman's $\rho$]
\addplot[color=blue, mark=square] table [y=SG5, x=iter]{plot_wss.txt};
\addlegendentry{3/4-unigram SGN}
\addplot[color=red, dashed, mark=*] table [y=SGU, x=iter]{plot_wss.txt};
\addlegendentry{uniform SGN}
\addplot[color=black, mark=otimes] table [y=cASG1, x=iter]{plot_wss.txt};
\addlegendentry{caSGN2($n_{critic}$=1)}
\addplot[color=green, mark=triangle] table [y=cASG5, x=iter]{plot_wss.txt};
\addlegendentry{caSGN2($n_{critic}$=5)}
\addplot[color=RYB3, mark=diamond] table [y=SGUG, x=iter]{plot_wss.txt};
\addlegendentry{unigram SGN}
\end{axis}
\end{tikzpicture}}
}
\qquad
\hspace{-.9cm}
\subfigure [{Google Analogy}] {\label{fig:bb}
\scalebox{0.48}{\begin{tikzpicture}
\begin{axis}
[legend style={at={(0.95,0.4)},nodes={scale=0.9, transform shape}},
xlabel=iteration ($\times 10000$),
  ylabel=Total Accuracy],
\addplot[color=blue, mark=square] table [y=SG5, x=iter]{plot_an.txt};
\addlegendentry{3/4-unigram SGN}
\addplot[color=red, dashed, mark=*] table [y=SGU, x=iter]{plot_an.txt};
\addlegendentry{uniform SGN}
\addplot[color=black, mark=otimes] table [y=cASG1, x=iter]{plot_an.txt};
\addlegendentry{caSGN2($n_{critic}$=1)}
\addplot[color=green, mark=triangle] table [y=cASG5, x=iter]{plot_an.txt};
\addlegendentry{caSGN2($n_{critic}$=5)}
\addplot[color=RYB3, mark=diamond] table [y=SGUG, x=iter]{plot_an.txt};
\addlegendentry{unigram SGN}
\end{axis}
\end{tikzpicture}}
}
\caption{Part of performance varying curves on WS and Google Analogy.}
\label{fig:curve}
\end{figure}

In addition, Figure \ref{fig:curve} shows that ufSGN performs poorly. Notably, ufSGN's ${\mathbb Q}$ is far from ${\mathbb P}$ as shown in Table \ref{tab:jsd} so the classification task is ``not challenging'' and the classifier does not need to learn much about the data. Notice that ufSGN's performance converges very slowly, and this phenomenon is consistent with Remark 1. 

Figure \ref{fig:curve} also presents two caSGN2 models with different $n_{critic}$\footnote{Notation $n_{critic}$ is the number of
iterations to apply to the discriminator before per generator iteration.}. We notice that caSGN2 with $n_{critic}$=1 does not converge faster than $n_{critic}$=5. Thus, we can choose a larger $n_{cirtic}$ to reduce the running time (see Section \ref{sec:furtherdiscussion} for a further discussion of time complexity). At the beginning of training, the performance of caSGN2 improves slightly slower than 3/4-ugSGN and ugSGN. This is not surprising since caSGN2's ${\mathbb Q}$ is not ``competitive'' at first. After getting closer to ${\mathbb P}$ (as indicated in Table \ref{tab:jsd}), caSGN2 outperforms other models. These observations convey the message that the best ${\mathbb Q}$ is ${\mathbb P}$ and GAN is a desired method for applying hypothesis \ref{hp:QequalP} to practice because word pairs can be uniformly trained at the initialization and the generator will force the classifier to learn as much as possible about the data when ${\mathbb Q}$
gradually moves to ${\mathbb P}$.

\begin{table*}[!htbp]
\caption{Spearman's $\rho$ ($\ast 100$) on the word similarity tasks (text8). Statistical significance difference from baseline SGN using the t test: $\dag$ indicates the $p$ -value $<  0.05$ and $\ddag$ indicates the $p$ -value $<  0.01$ } \label{tab: WS}
\centering
\setlength{\tabcolsep}{3mm}{
\begin{tabular}{c|c|c|c|c|c|c|c|c|c|c}
  \toprule
  Models&  WS &  SIM & REL&MT287& MT771&RW&MEN&MC&RG&SimLex\\
    \midrule
  SGN&70.58&74.54&68.10&64.29&55.59&36.63 &62.16&60.82&60.17&29.69\\
  ACE &71.49$^\ddag$&74.61&69.50$^\ddag$&65.52$^\ddag$&56.63$^\ddag$&{\bf 37.85}$^\ddag$ &62.75&62.65&62.39&30.37$^\ddag$\\
  aSGN & 71.12$^\dag$ & 74.76 & 68.82 & {\bf 65.67}$^\ddag$ &56.47$^\dag$& 37.58$^\ddag$ & 62.63 & 62.36&62.36&30.49$^\ddag$\\
  caSGN1 & 71.72$^\ddag$ & {\bf 75.11} & {\bf 69.77}$^\ddag$ & 65.63$^\ddag$ &56.63$^\ddag$& 37.63$^\ddag$ & {\bf 63.40}$^\dag$ &  62.54$^\dag$&64.18$^\ddag$&30.36$^\ddag$\\
  caSGN2 & {\bf 72.02}$^\ddag$ & 75.05 & 69.64$^\ddag$ & 65.44$^\ddag$& {\bf 57.02}$^\ddag$& 37.61$^\ddag$  &  63.36$^\dag$ & {\bf 62.86}$^\dag$&{\bf 64.63}$^\ddag$&{\bf 30.79}$^\ddag$\\
  caSGN3 & 71.74$^\ddag$ & 74.61 & 69.63$^\ddag$ & 65.57$^\ddag$&56.56$^\ddag$& 37.78$^\ddag$ & 62.69 & 62.61&62.52&30.31$^\ddag$\\
  \bottomrule
\end{tabular}
}
\end{table*}

\begin{table}[!htbp]
\caption{Accuracy on the word analogy task (text8).}\label{tab:wordanalogy}
\centering
\setlength{\tabcolsep}{5mm}{
\begin{tabular}{c|c|c|c}
   \toprule
  Model&Semantic&Syntactic&Total\\
   \midrule
SGN & 20.50 &26.77 &24.16\\
ACE & 20.43 &28.25$^\ddag$ &25.00$^\dag$\\
aSGN & 20.84 &27.86$^\ddag$ &24.94$^\dag$\\
caSGN1 & 21.25 &{\bf 28.30}$^\ddag$ &25.36$^\ddag$\\ 
caSGN2 & {\bf 21.99} & 27.85$^\ddag$ & {\bf 25.41}$^\dag$\\ 
caSGN3 & 21.03 &27.87$^\ddag$ &25.03\\
   \bottomrule
\end{tabular}
}
\end{table}

\subsection{Robustness and Statistical Analysis}
In this section, we provide more results including experimental results of different word dimensions and violin plots of models trained on text8. Furthermore, the extensive experimental results verify that these observations are robust across different corpus sizes and word dimensions. In particular, we provide the results of the significance test based on 12 runs, and their corresponding violin plots are also given.

\begin{figure*}[t]
\centering
\subfigure [WS] {\label{fig:wsa}
\includegraphics[width=0.17\textwidth]{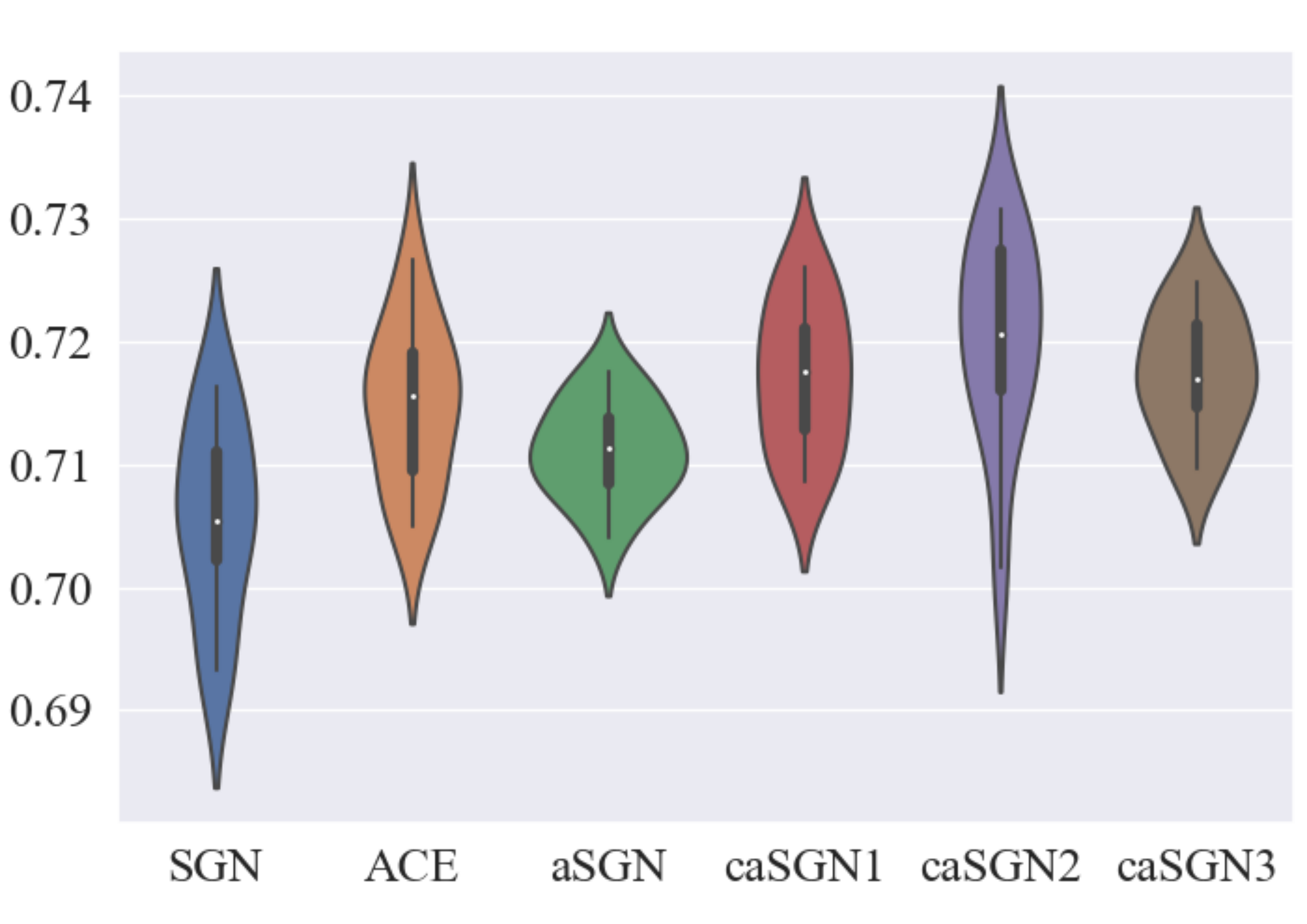}}
 \quad
\subfigure [SIM] {\label{fig:wsb}
\includegraphics[width=0.17\textwidth]{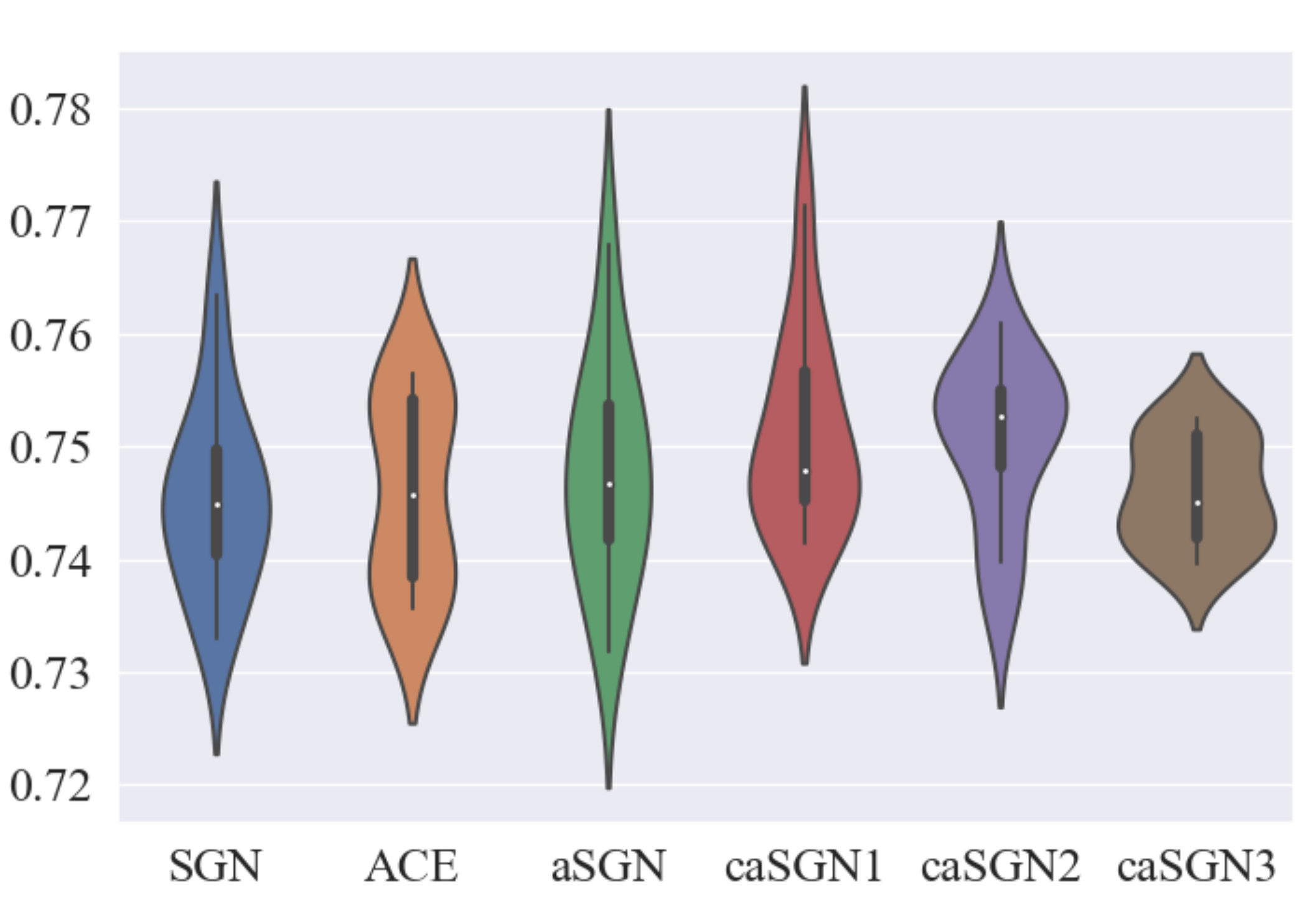}}
 \quad
\subfigure [REL] {\label{fig:wsc}
\includegraphics[width=0.17\textwidth]{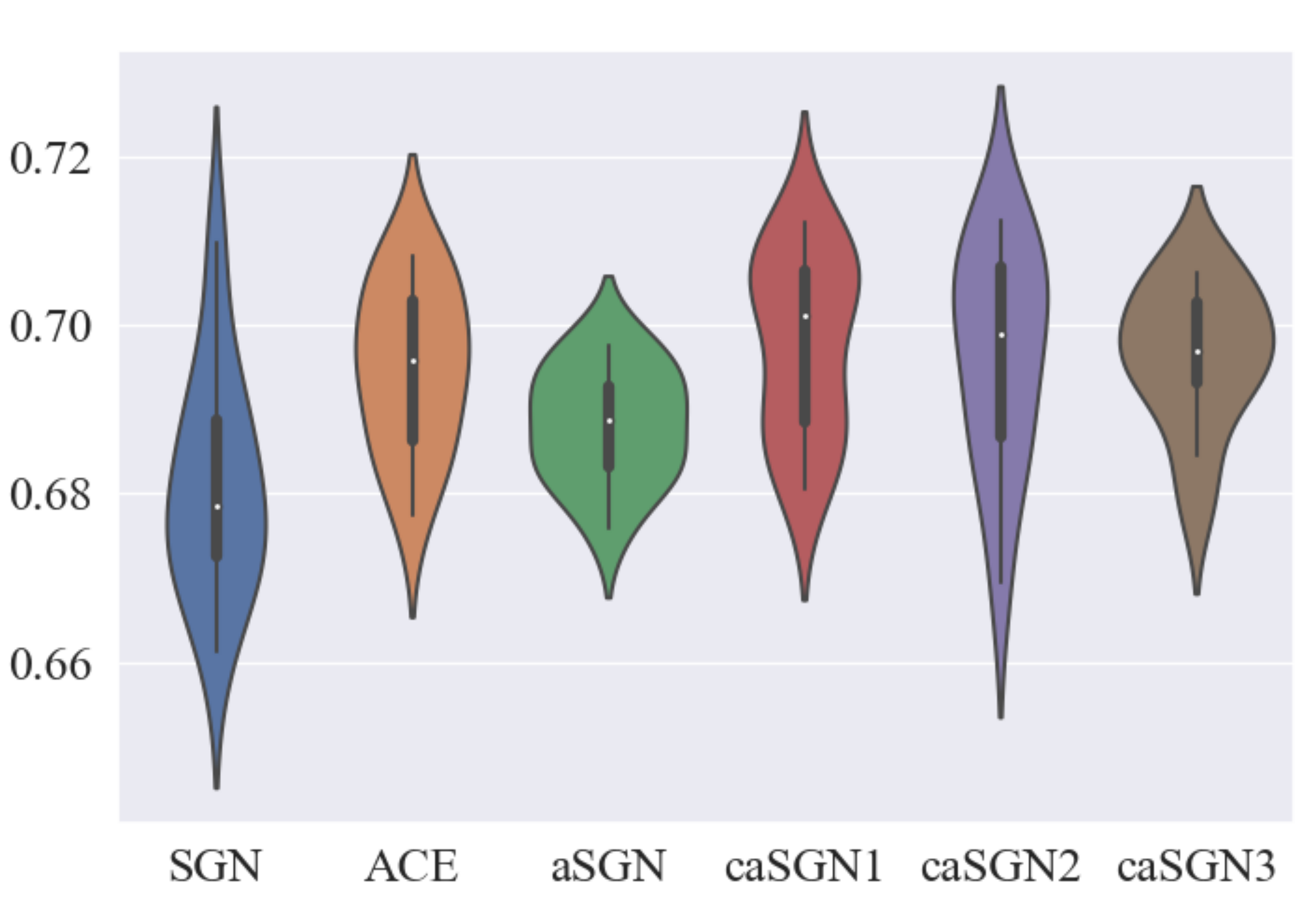}}
 \quad
\subfigure [MTurk287] {\label{fig:wsd}
\includegraphics[width=0.17\textwidth]{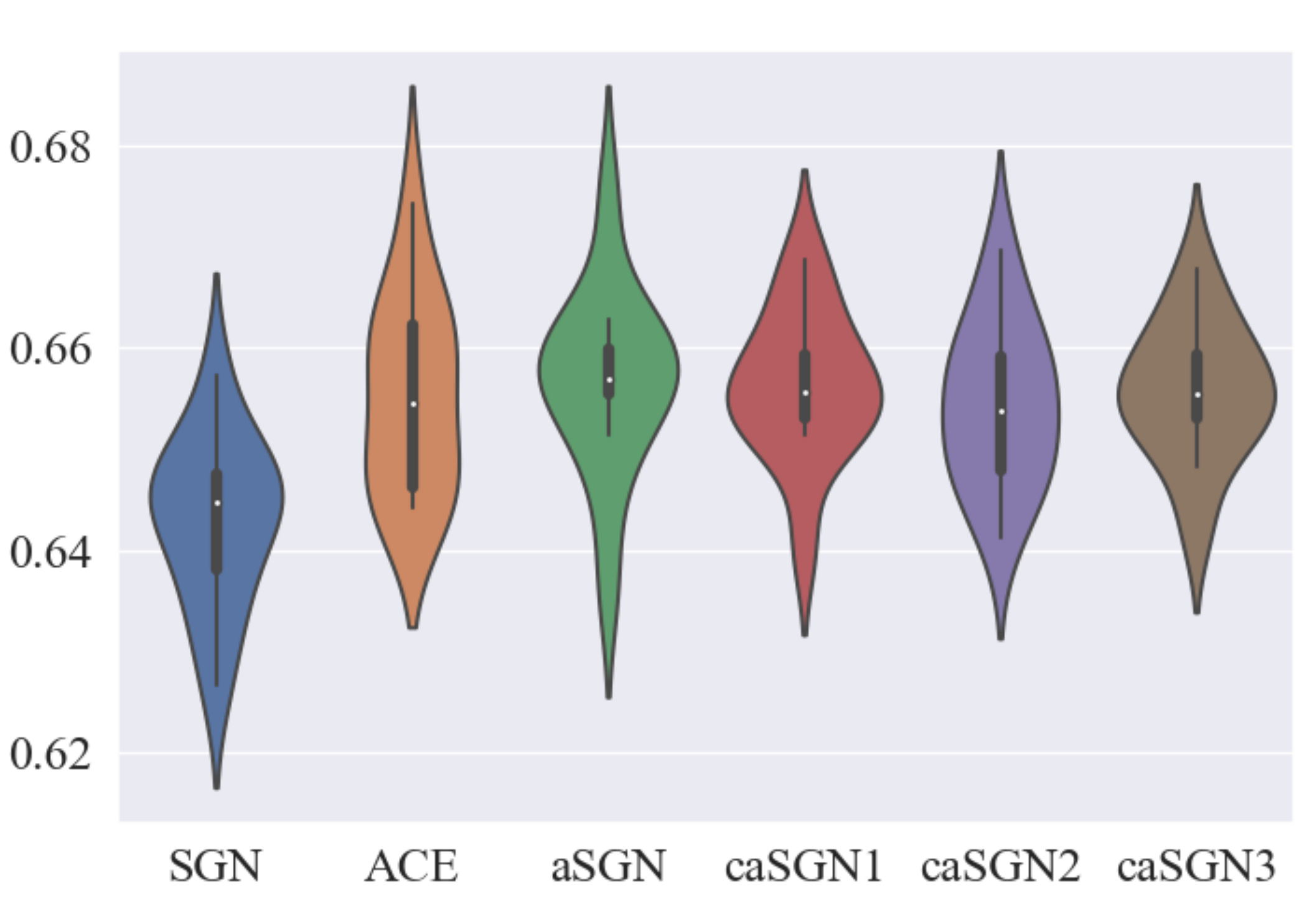}}
 \quad
\subfigure [MTurk771] {\label{fig:wse}
\includegraphics[width=0.17\textwidth]{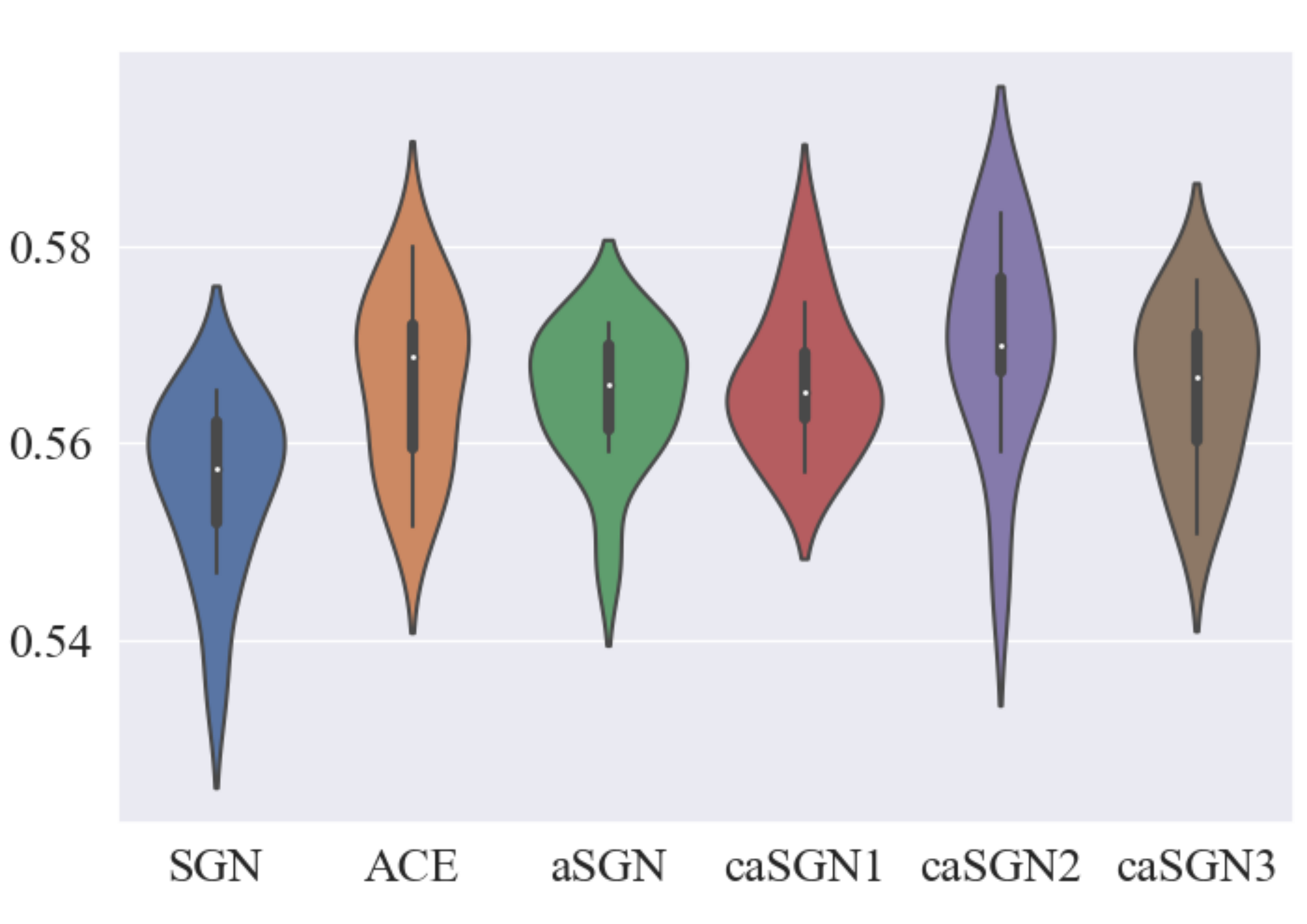}}
 \quad
\subfigure [RW] {\label{fig:wsf}
\includegraphics[width=0.17\textwidth]{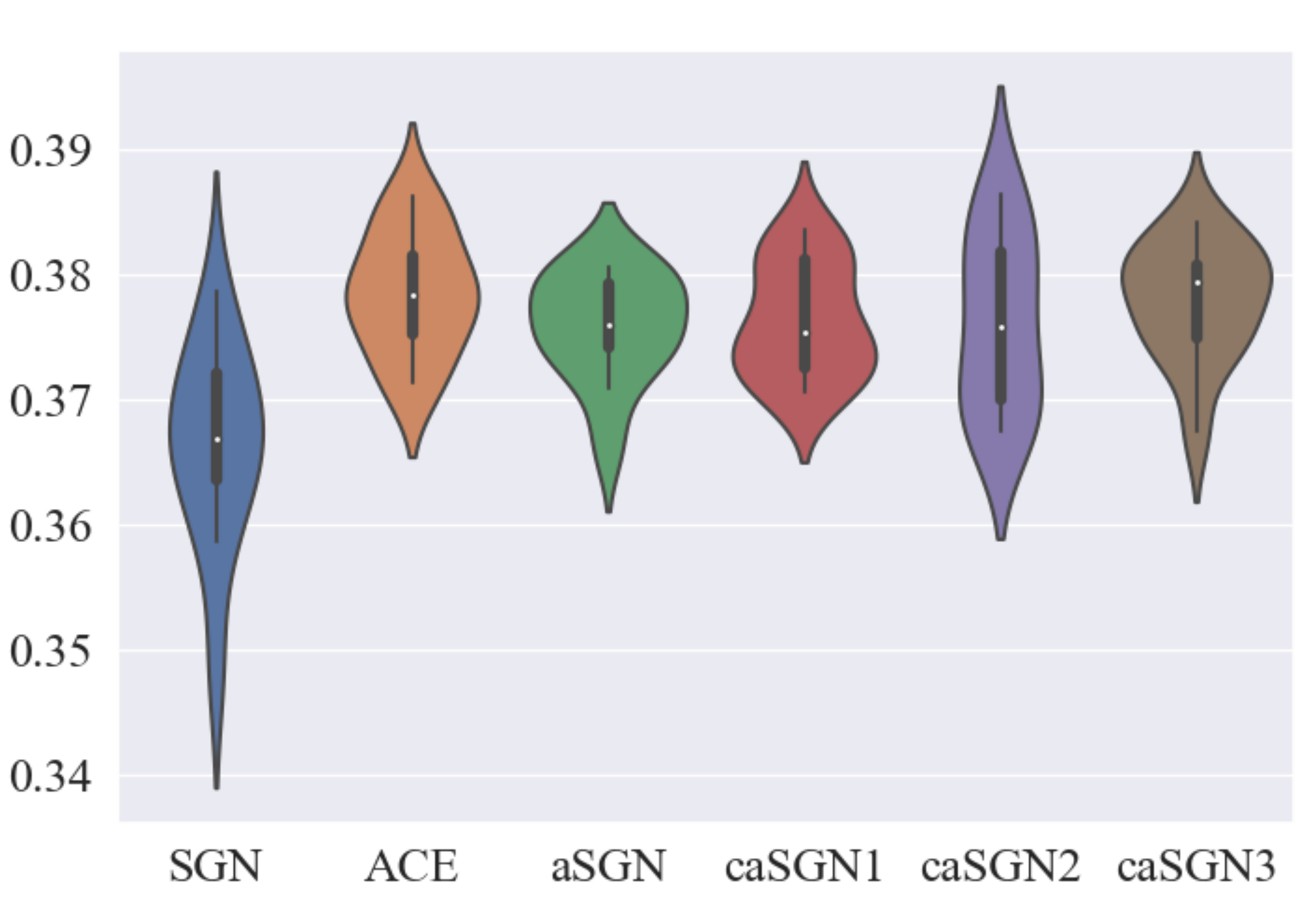}}
 \quad
\subfigure [MEN] {\label{fig:wsg}
\includegraphics[width=0.17\textwidth]{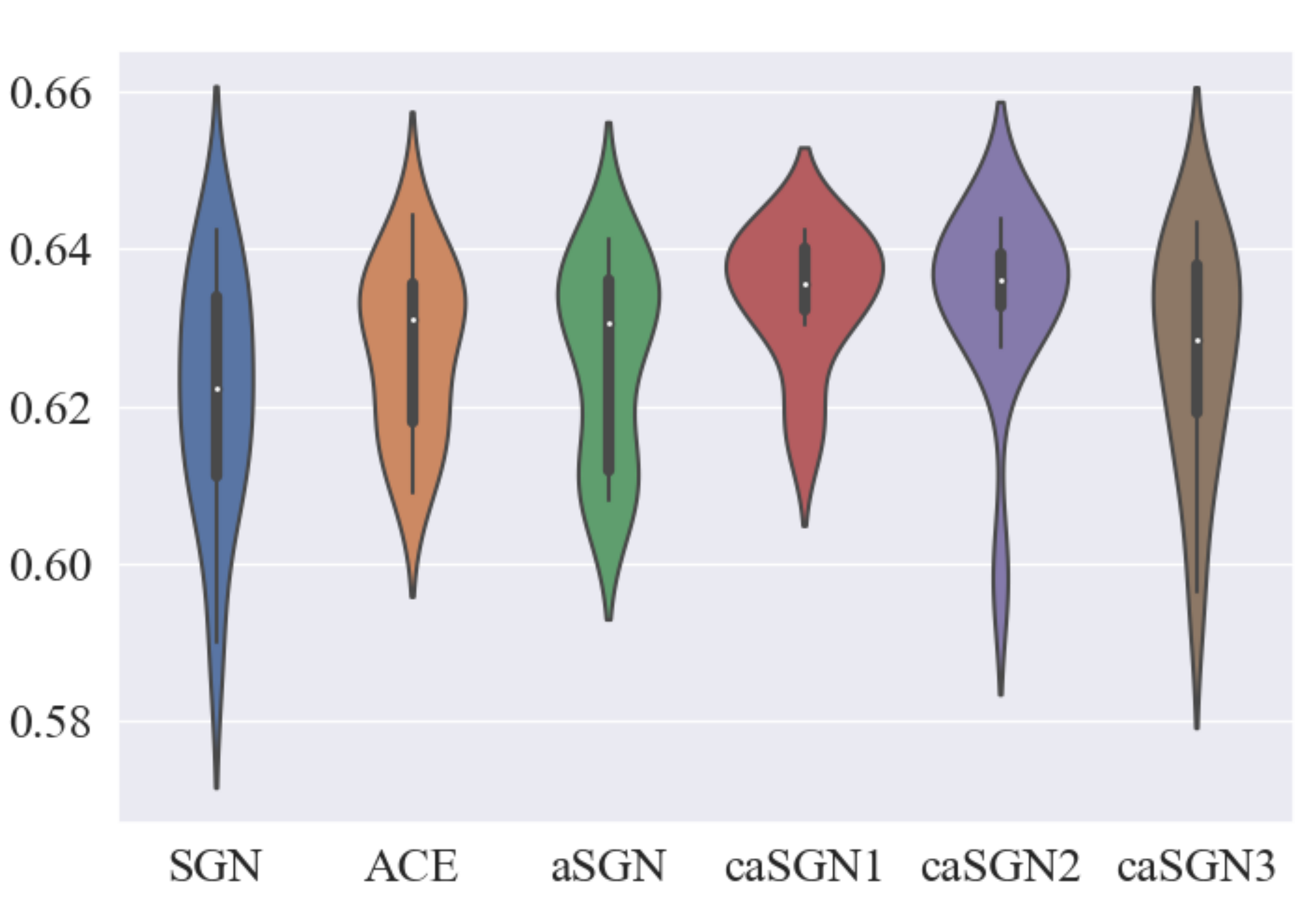}}
 \quad
\subfigure [MC] {\label{fig:wsh}
\includegraphics[width=0.17\textwidth]{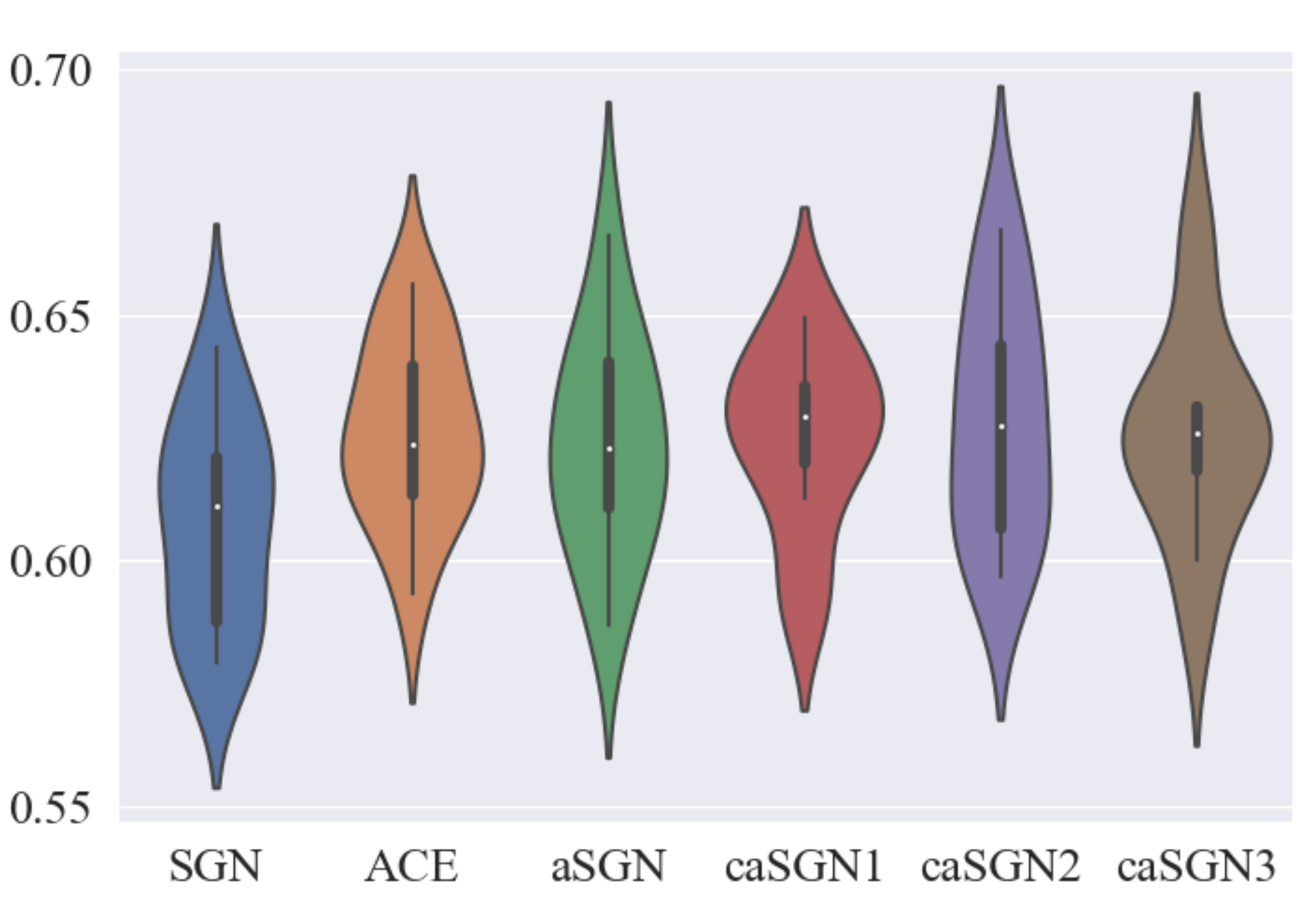}}
 \quad
\subfigure [RG] {\label{fig:wsi}
\includegraphics[width=0.17\textwidth]{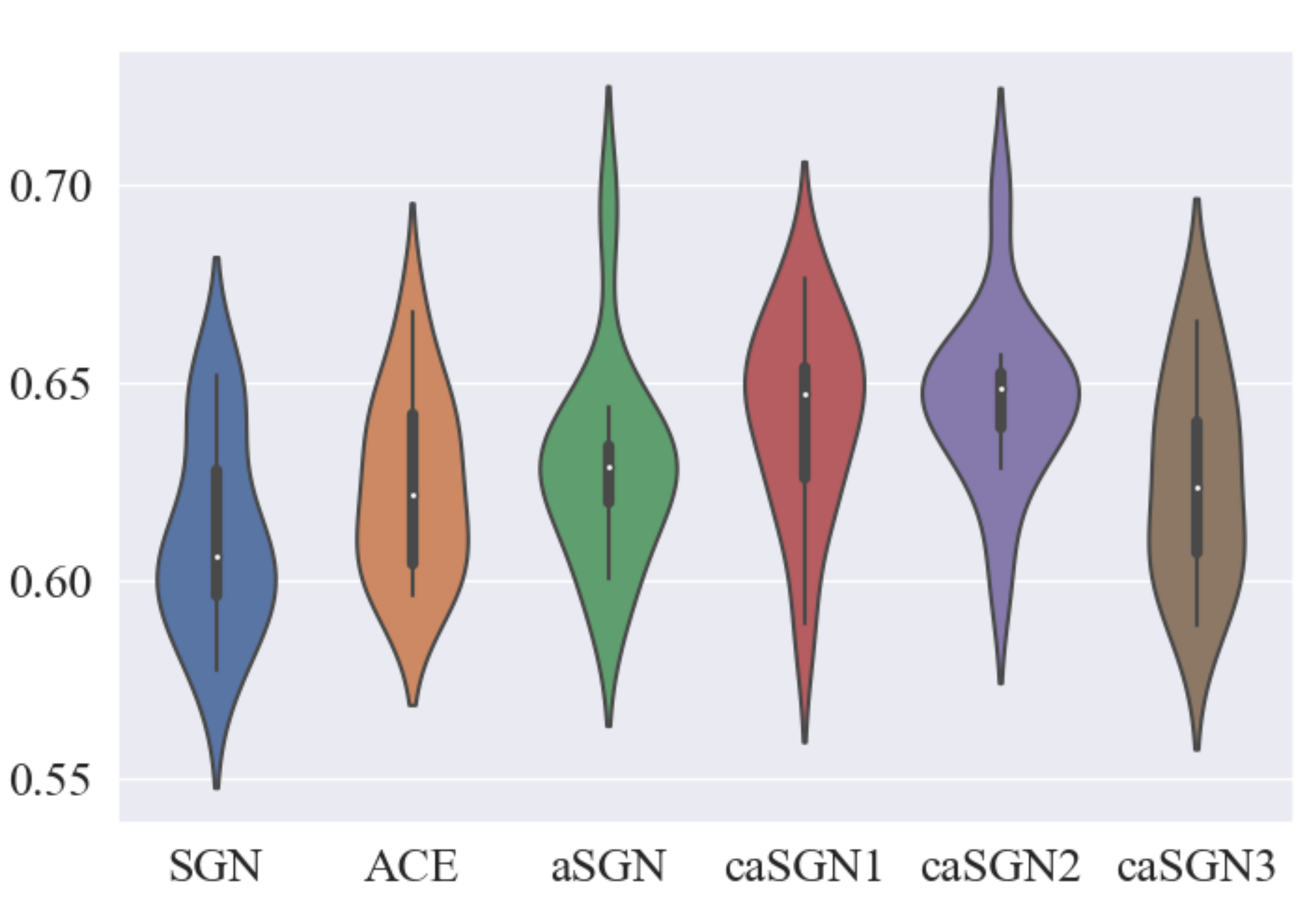}}
 \quad
\subfigure [SimLex] {\label{fig:wsj}
\includegraphics[width=0.17\textwidth]{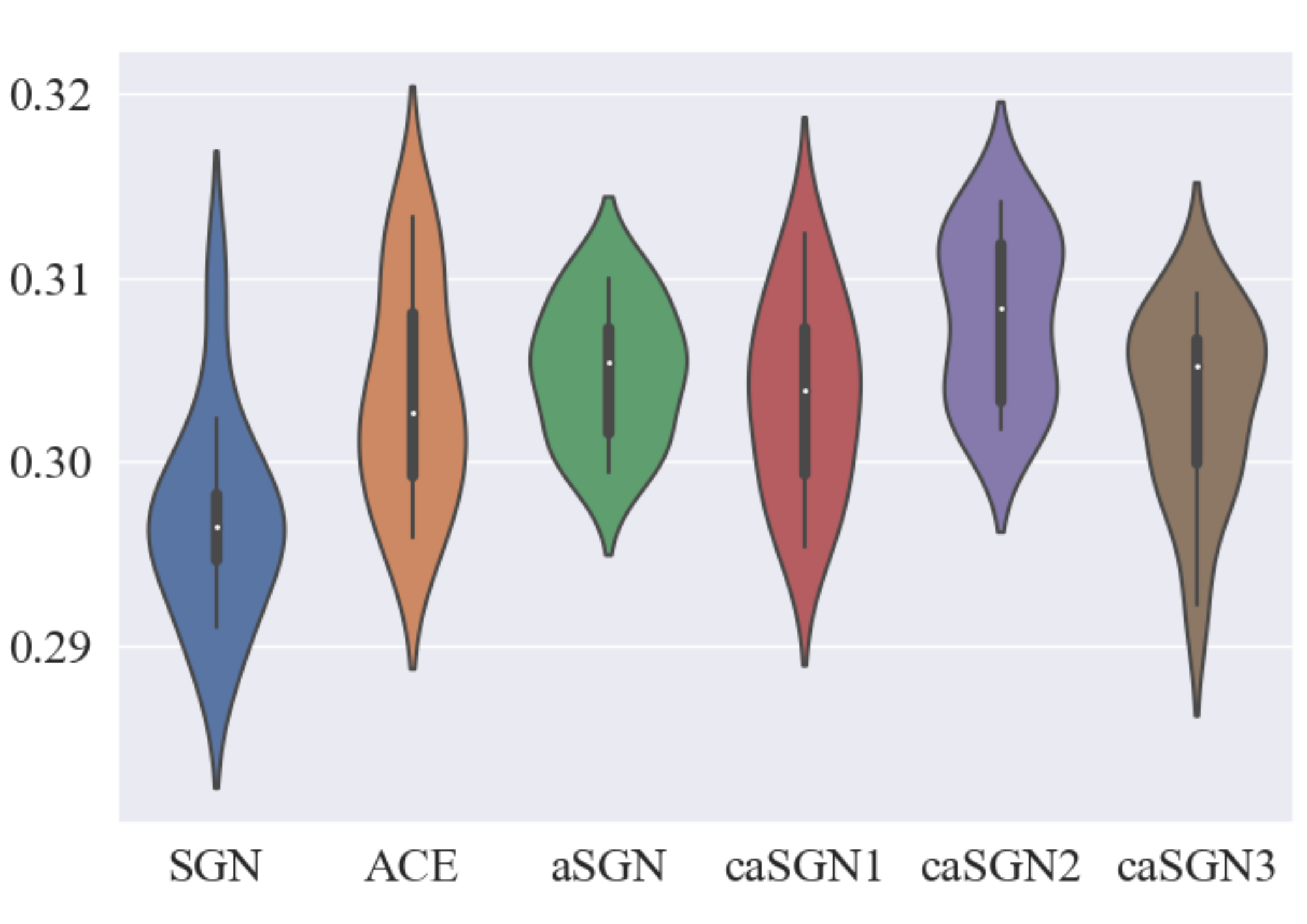}}
 \quad
\caption{Violin Plots of the Word Similarity Task.}
\label{fig:violinWS}
\end{figure*}

\begin{figure}[t]
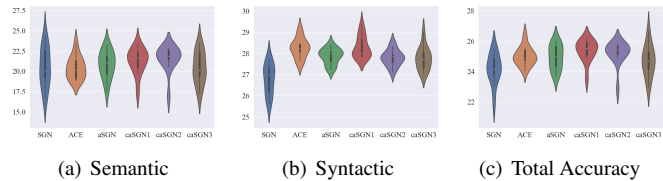

\centering
\subfigure [Semantic] {\label{fig:waa}
\includegraphics[width=0.145\textwidth]{Semantic.pdf} }
\subfigure [Syntactic] {\label{fig:wab}
\includegraphics[width=0.145\textwidth]{Syntactic.pdf} 
}
\subfigure [Total Accuracy] {\label{fig:wac}
\includegraphics[width=0.145\textwidth]{Total.pdf}}
\caption{Violin Plots of the Word Analogy Task.}
\label{fig:violinWA}
\end{figure}

\begin{table*}[h]
\caption{Spearman's $\rho$ ($\ast 100$) of 50-dimension word embeddings on the word similarity tasks (text8).}
 \label{tab: WS-50}
\centering
\setlength{\tabcolsep}{3mm}{
\begin{tabular}{c|c|c|c|c|c|c|c|c|c|c}
   \toprule
 Models&  WS &  SIM & REL&MT287& MT771&RW&MEN&MC&RG&SimLex\\
    \midrule
  SGN&68.07&71.90&65.40&63.77&54.36&35.53 &60.64&54.62&56.03&26.79\\
  ACE &69.03&72.24& 65.87&64.72&{\bf 55.50}&36.41 &{\bf 61.71}& 58.02&56.71&{\bf 29.87}\\
  aSGN & 68.66 & 71.80 &65.80 &  65.27 & 54.44 & 35.55 & 60.74 & 52.17&54.68&26.90\\ 
  caSGN1 &  68.83 & 71.67 & 65.36 & 64.32 &53.76&  {\bf 36.85} & 60.11 & 57.34& 54.89 &28.15\\
  caSGN2 &  {\bf 70.10} & {\bf 73.23} & {\bf 67.10} & {\bf 65.72}& 55.07& 36.75  &  61.04 & 57.60&57.93& 28.73\\
  caSGN3 & 69.63 & 73.07 & 66.80 & 64.79&54.19 & 36.25 &60.70 & {\bf 59.21} & {\bf 58.42}&26.96\\
   \bottomrule
\end{tabular}
}
\end{table*}

\begin{table*}[h]
\caption{Spearman's $\rho$ ($\ast 100$) of 200-dimension word embeddings on the word similarity tasks (text8).}
 \label{tab: WS-200}
\centering
\setlength{\tabcolsep}{3mm}{
\begin{tabular}{c|c|c|c|c|c|c|c|c|c|c}
   \toprule
 Models&  WS &  SIM & REL&MT287& MT771&RW&MEN&MC&RG&SimLex\\
    \midrule
  SGN&70.94&74.41&68.17&63.43&56.42&37.87 &62.33&66.14&61.92&31.03\\
  ACE &71.76&74.84& 68.72& {\bf 65.52}&57.80&{\bf 40.11} &63.75& {\bf 70.44}&68.19&32.14\\
  aSGN & 71.96 & {\bf 74.95} &69.50 &64.61 & {\bf 58.19} & 39.02 & 63.73 & 66.03&64.80&32.66\\
  caSGN1 &  69.16 & 71.72 & 67.99 & 62.52 &57.50&  38.48 & 65.61 & 70.26& 66.98 &31.37\\
  caSGN2 &  {\bf 72.23} & 74.80 & 69.67 & 65.09& 58.01& 38.63  &  64.75 & 68.43&68.02& 32.86\\
  caSGN3 & 71.44 & 72.46 & {\bf 71.28} & 65.33&57.54 & 39.84 &{\bf 66.42} &  64.85 & {\bf 69.14}&{\bf 32.88}\\
  \bottomrule
\end{tabular}
}
\end{table*}

\begin{table}[!ht]
\caption{Accuracy of 50-dimension word embeddings on the word analogy task (text8).}\label{tab:wa-50}
\centering
\setlength{\tabcolsep}{5mm}{
\begin{tabular}{c|c|c|c}
  \toprule
  Model&Semantic&Syntactic&Total\\
  \midrule
SGN &15.47 & 20.83 &18.60 \\
ACE & 17.89 &20.56 &19.45\\
aSGN & 16.83 &{\bf 21.46} &19.53\\
caSGN1 & {\bf 17.99} & 20.09 & 19.22\\ 
caSGN2 &  17.53 & 21.15 & {\bf 19.64}\\ 
caSGN3 &  17.91 &20.79 &19.59\\
   \bottomrule
\end{tabular}
}
\end{table}

\begin{table}[h]
\caption{Accuracy of 200-dimension word embeddings on the word analogy task.}\label{tab:wa-200}
\centering
\setlength{\tabcolsep}{5mm}{
\begin{tabular}{c|c|c|c}
  \toprule
  Model&Semantic&Syntactic&Total\\
  \midrule
SGN &28.56 &26.12 &27.13 \\
ACE &  28.70 &27.56 &28.03 \\
aSGN &  27.10& 27.94& 27.59\\
caSGN1 &27.63 &{\bf 28.47} &28.12\\ 
caSGN2 & 31.59& 27.23& {\bf 29.05}\\ 
caSGN3 & {\bf 32.67} &28.15 &29.02\\
  \bottomrule
\end{tabular}
}
\end{table}

\subsubsection{Statistical Significance}

The results for the 100-dimensional word embeddings and the statistical significance tests, based on 12 runs on the text8 corpus, are provided in Table \ref{tab: WS} and Table \ref{tab:wordanalogy}.  In Table \ref{tab: WS}, we see that each adaptive SGN model outperforms the vanilla SGN on all data sets. Based on the 12 running results, the violin plots are shown in Figure \ref{fig:violinWS}. Basically, most of the differences in Table \ref{tab: WS} are significant, except for SIM for which no difference is significant ($p$-value $>0.05$).  As shown in Table \ref{tab:wordanalogy}, adaptive WCC models, particularly conditional adaptive models, continue to perform better than other models. Specifically, there is no statistically significant difference in semantics, but the differences in total accuracy are significant, except for caSGN3. You can refer to Figure \ref{fig:violinWA} for the violin plots.

In summary, based on 12 independent runs on the text8 corpus, the performance advantages of adaptive models are statistically significant ($p < 0.05$) for most datasets and tasks. Violin plots in Figures \ref{fig:violinWS} and \ref{fig:violinWA} visually confirm the consistent performance distributions.

\subsubsection{Impact of Embedding Dimension}
We show the results of the 50-dimension word embeddings in Table \ref{tab: WS-50} and Table \ref{tab:wa-50}, and the results of 200-dimensional word embeddings in Table \ref{tab: WS-200} and Table \ref{tab:wa-200}. We can see that the experimental results verify that the observations are robust across various word dimensions.

In summary, the results demonstrate that our observations remain consistent across embedding dimensions of 50, 100, and 200, confirming the robustness of noise distribution effects across model capacities


\section{Further discussions}\label{sec:furtherdiscussion}
\subsection{Limitation: time complexity}
Arguably, one of the reasons for using NCE \cite{GutmannH12} is the efficiency gain in terms of time complexity. As each adaptive SGN becomes complex due to the training of the generator, it would likely require a growing amount of time. In fact, compared to vanilla SGN, the running time of adaptive SGN models has increased more than tenfold. 

However, it is important to note that word embeddings, or even other embeddings, are usually pre-trained for downstream tasks. Thus, the computation is a one-off cost. Moreover, there are many specific tricks that can accelerate the training of GAN. In addition to the way mentioned in Section~\ref{sec:432}, one can see \cite{BoseLC18} and \cite{BudhkarVHR19} for more suggestions.

\subsection{WCC vs NCE}
While NLP researchers tend to regard SGN as an application of NCE to word embedding, we assert that this understanding requires clarification. Although both SGN and NCE use noise samples to construct a binary classification problem, and it is possible to convert NCE to a conditional version so that the resulting binary classification problem looks the same as that in SGN, the two approaches differ fundamentally in their theoretical foundations.

NCE aims at learning a possibly unnormalized ``density function'' $f$ on a space of examples, through a set of observed examples; the starting point of NCE is the setting up of the function $f$. NCE then draws noise examples to form a negative class and reformulates the learning problem as a binary classification problem. It is remarkable that the parametrization of $f$ is independent of any noise distribution used in NCE.

SGN aims at learning the representation of elements in a space (where each element is a word-context pair); the starting point of SGN is a binary classification formulation. That is, in SGN, there does not exist a parametrized density function independent of the choice of noise distribution. If one must equate SGN with a special case of NCE, then the effective density model in SGN would have to be parametrized by the noise distribution.

This difference between NCE and SGN also results in their differences in training: in NCE, one must be able to evaluate the noise distribution, but this is not required in SGN; in NCE, the partition function of the unnormalized density function f must also be estimated during training, but this is also not required in SGN. Thus, although SGN is inspired by NCE, it is not NCE.

The distinction of the two extends to distinguishing WCC from NCE. Specifically, WCC is a generalization of SGN to allow for more general forms of noise distribution. This generalization is ``orthogonal'' to the difference between NCE and SGN. 
Although the formalism of NCE allows any noise distribution, negative sampling with an arbitrarily distributed noise is established for the first time in this paper. When generalizing SGN to WCC, the PMI-MF result no longer holds. Another contribution of this paper is establishing the ``correctness’’ of WCC in Corollary \ref{cor:justifyWCC}.

\subsection{WCC vs ACE}

Our WCC framework generalizes SGN to arbitrary noise distributions while maintaining theoretical soundness, as established in Corollary \ref{cor:justifyWCC}. Within this broader framework, ACE may be viewed as an adaptive conditional member of WCC, with a structure corresponding to Figure~\ref{fig:generator} (a) or (b) in Section~\ref{sec:3-4} with the variable $Z$ deleted.

\subsection{WCC vs Modern Language Models}

Our WCC framework provides a unifying perspective that encompasses both classical embedding models and modern large language models. While contemporary large language models \cite{devlin2019bert,leenv} employ sophisticated architectures, their training objectives can be effectively interpreted through the WCC lens. Although the success of these models largely stems from complex neural architectures (e.g., Transformer blocks) rather than negative sampling alone, such advanced architectures naturally fit within the comprehensive WCC framework.

More precisely, the functions $f$, $g$, and $s$ introduced in Section~\ref{sec:32} represent simplified instantiations that can be substantially enhanced. For example, the scoring function $s$ can be implemented as a learnable neural network rather than a simple inner product. When $f$ incorporates modern architectures like Transformers, which are the fundamental building blocks of contemporary large language models, WCC effectively describes the pre-training process and the resulting representations. Since our theoretical analysis remains architecture-agnostic, the core theoretical results maintain their validity across different model implementations.

Furthermore, modern language modeling objectives can be viewed as specialized instances of the Continuous Bag of Words (CBOW) formulation with expanded context definitions and modified window mechanisms, which similarly accommodate negative sampling strategies. For instance, in autoregressive language modeling tasks, one could replace conventional Softmax layers with negative sampling schemes, while in contrastive learning objectives common in large model training, one could maximize similarity between appropriate sequence pairs while minimizing similarity with strategically sampled negatives. In such scenarios, our analytical framework offers valuable theoretical guidance. We defer large-scale pre-training experiments leveraging these insights to researchers with adequate computational resources.

\subsection{Performance gain of WCC over existing models}
The main objective of this paper is to present a unified principle for SGN-like models, not to develop an LLM-like ``super-model'' for word embedding. Word embedding and, more generally, representation learning, have witnessed great success in recent years. However, the fundamental principles underlying these models are poorly explored. For example, even the problem of ``representation learning'' is poorly defined: Except by relying on some downstream tasks to evaluate a learned representation, very little has been developed pertaining to metrics or principles for word-embedding alike representation learning; what makes a ``good'' representation remains elusive. In this respect, Corollary \ref{cor:justifyWCC} of this paper and PMI-MF are among the only results, to our knowledge.

Despite our theoretical focus, this paper does yield new models that outperform existing models. Compared to 3/4-ugSGN, our models not only perform significantly better, but they are also more ``negative-sample efficient'': our adaptive samplers draw one noise sample per sample, whereas 3/4-ugSGN draws 5 noise samples per sample. The fact that ACE also performs well should not negatively affect our results. As stated above, ACE is essentially an adaptive conditional member of WCC. Its good performance further endorses some claims of our paper, namely that WCC is a useful generalization and that adaptive conditional schemes can be advantageous. However, the new models discovered still significantly outperform ACE in some tasks.

\section{Conclusion}\label{conclusion}
In this paper, we introduce the WCC framework for word embedding that generalizes SGN to a much wider family. We provide a theoretical analysis 
that justifies the framework. The well-known matrix-factorization result of \cite{LevyG14} can be recovered from this analysis. We experimentally study the impact of noise distribution in the framework. Our experiments confirm the hypothesis that the best noise distribution is in fact the data distribution.  Along our way, novel word embedding models are developed and shown to outperform the existing models in the WCC family.
Looking forward, our work opens several promising research directions. Architecturally, applying WCC principles to Transformer-based architectures could yield more efficient large-scale models. Theoretically, further formalizing the relationship between noise distribution properties and embedding quality metrics would deepen our understanding. Practically, extending the WCC framework to multimodal representation learning could broaden its applicability. These directions collectively advance toward more principled and effective representation learning methodologies.

\bibliographystyle{IEEEtran}

\bibliography{cas-refs}


 





\end{document}